\newtheorem{dfn}{Definition}
\newtheorem{thm}{Theorem}
\newtheorem{lem}{Lemma}
\newtheorem{rem}{Remark}
\newtheorem{assum}{Assumption}
\title{Explicit and Implicit Graduated Optimization in Deep Neural Networks}
\author{
    %Authors
    % All authors must be in the same font size and format.
    Naoki Sato,
    Hideaki Iiduka
}
\title{My Publication Title --- Single Author}
\author {
    Author Name
}
\title{My Publication Title --- Multiple Authors}
\author {
    % Authors
    First Author Name\textsuperscript{\rm 1,\rm 2},
    Second Author Name\textsuperscript{\rm 2},
    Third Author Name\textsuperscript{\rm 1}
}
\begin{document}

\maketitle

\begin{abstract}
Graduated optimization is a global optimization technique that is used to minimize a multimodal nonconvex function by smoothing the objective function with noise and gradually refining the solution. This paper experimentally evaluates the performance of the explicit graduated optimization algorithm with an optimal noise scheduling derived from a previous study and discusses its limitations. It uses traditional benchmark functions and empirical loss functions for modern neural network architectures for evaluating. In addition, this paper extends the implicit graduated optimization algorithm, which is based on the fact that stochastic noise in the optimization process of SGD implicitly smooths the objective function, to SGD with momentum, analyzes its convergence, and demonstrates its effectiveness through experiments on image classification tasks with ResNet architectures.
\end{abstract}

% Uncomment the following to link to your code, datasets, an extended version or similar.
%
\begin{links}
     \link{Code}{https://github.com/iiduka-researches/igo-aaai25}
     %\link{Extended version}{https://aaai.org/example/extended-version}
\end{links}

\section{Introduction}
Nonconvex optimization problems are ubiquitous in the machine-learning and artificial-intelligence fields, and those arising in training deep neural networks (DNN) \cite{Bengio2009Lea} are particularly interesting from multiple aspects. In particular, because the nonconvex functions that appear in DNN training have many local optimal solutions, stochastic gradient descent (SGD) \cite{Robbins1951Ast} and its variants, which use gradients to update the sequence, may fall into local optimal solutions and may not minimize losses sufficiently.

Graduated optimization \cite{Andrew1987Vis} is a global optimization method that can be applied to multimodal functions in order to avoid local optimal solutions and converge to the global optimal one. The method first prepares $M$ monotone decreasing noises $(\delta_m)_{m \in [M]}$. Then, by smoothing the original objective function $f$ with that noise, a sequence of $M$ smoothed functions $(\hat{f}_{\delta_m})_{m \in [M]}$ is obtained that gradually approaches the original objective function. The function $\hat{f}_{\delta_1}$ smoothed with the largest noise $\delta_1$ is then optimized, and the approximate solution is then used as the initial point for optimization of the function $\hat{f}_{\delta_2}$ smoothed with the second largest noise $\delta_2$. After that, the function $\hat{f}_{\delta_3}$ smoothed with the third largest noise $\delta_3$ is optimized with the second approximate solution as the initial point, and the procedure is repeated in order to search for the global optimal solution without falling into a local optimal one. See Figure \ref{fig:00} for a conceptual diagram of this process.

Graduated optimization is often used in machine learning and computer vision for, e.g., semi-supervised learning \cite{Olivier2006Aco, Vikas2006Det, Olivier2008Opt} and robust estimation \cite{Heng2020Gra, Antonante2022Out, Peng2023Ont}. In addition, score-based generative models \cite{Song2019Gen, Song2020Imp} and diffusion models \cite{Dickstein2015Dee, Ho2020Den, Song2021Sco, Robin2022Hig}, which are state-of-the-art models of image generation, use this approach.

Theoretical analysis of graduated optimization began with the pioneering work by \cite{Duchi2012Ran} on nonsmooth convex functions, and several papers \cite{Hossein2015ATh, Elad2016OnG, Iwakiri2022Sin, Li2023Sto, Sato2023Usi} have produced important results. In particular, \cite{Elad2016OnG} defined a family of nonconvex functions that satisfy the conditions for convergence of a graduated optimization algorithm, called $\sigma$-nice, and proposed a first-order algorithm based on graduated optimization. They also studied the convergence and convergence rate of the algorithm to a global optimal solution for $\sigma$-nice functions. In addition, \cite{Sato2023Usi} showed that the stochastic noise in the optimization process that SGD naturally has can implicitly smooth the objective function, and they proposed an implicit graduated optimization algorithm that exploits this property. Furthermore, they defined a new $\sigma$-nice function, which is an extension of the $\sigma$-nice function, and provided a convergence analysis of the implicit graduated optimization algorithm for this function. Here, this paper refers to graduated optimization with explicit smoothing operations (see Definition \ref{dfn:fhat}) as ``explicit graduated optimization'' and to graduated optimization with implicit smoothing operations as ``implicit graduated optimization''.

\begin{figure*}[htbp]
\begin{tabular}{c}
\begin{minipage}[t]{0.99\hsize}
\centering
\includegraphics[width=0.91\textwidth]{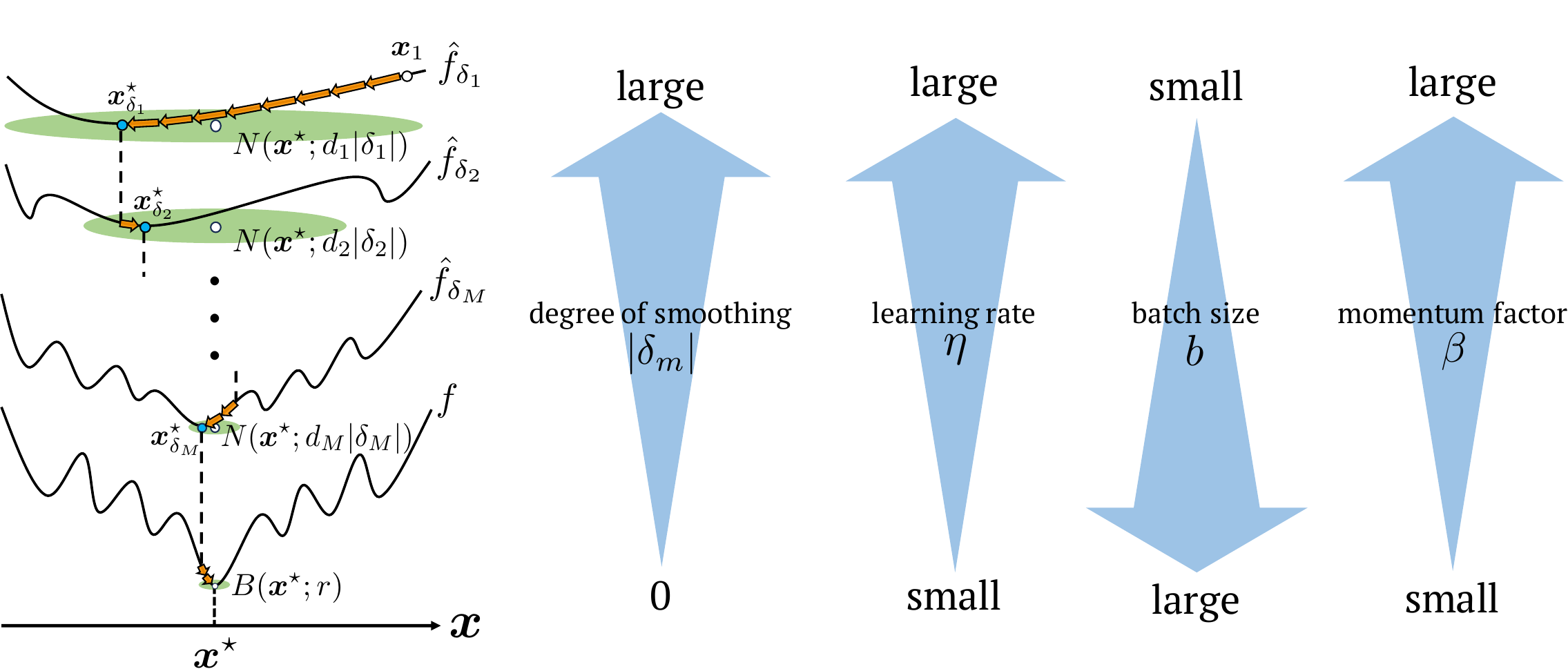}
\caption{Conceptual diagram of new $\sigma$-nice function and its smoothed versions and implicit graduated optimization. Note that this is a direct quotation from the prior work \cite{Sato2023Usi} with minor changes made to fit our results.}
\label{fig:00}
\end{minipage}
\end{tabular}
\vspace*{-12pt}
\end{figure*}

This paper analyzes the new $\sigma$-nice function and discusses the implicit graduated optimization algorithm, which has so far been studied in relation to stochastic noise in SGD, in relation to stochastic noise in SGD with momentum. It also complements the prior studies with important theoretical and experimental validations on explicit and implicit graduated optimization. 

\subsection{Contributions}
\textbf{1. An example of a new $\sigma$-nice function.} \cite{Sato2023Usi} proposed new $\sigma$-nice functions, i.e., a family of nonconvex functions that satisfy the necessary conditions for analysis and convergence of graduated optimization algorithms, but did not provide an example. Here, we theoretically prove that Rastrigin's function \cite{Torn1989Glo, Rudolph1990Glo}, one of the traditional benchmark functions for global optimization, is a new $\sigma$-nice function (Theorem \ref{thm:01}).

\noindent
\textbf{2. Experimental validation in explicit graduated optimization.} \cite{Sato2023Usi} theoretically derived the optimal decay rate of the noise in both explicit and implicit graduated optimization, but did not provide numerical experiments to determine whether it is valid for explicit graduated optimization (Algorithm \ref{alg:gnc}). Here, we applied the explicit graduated optimization algorithm and general global optimization methods to traditional benchmark functions for global optimization and found that the explicit graduated optimization algorithm with the optimal noise schedule outperforms the other methods on some problems (Table \ref{tab:1}). We also applied the explicit graduated optimization algorithm to DNNs and found that its performance is not superior to other methods (Figure \ref{fig:01}).

\noindent
\textbf{3. Experimental validation in implicit graduated optimization with SGD.} \cite{Sato2023Usi} proposed an implicit graduated optimization algorithm using stochastic noise in SGD (Algorithm \ref{alg:gnc2}), but did not provide numerical experimental results with the noise decay rate set as per theory. In this paper, we provide numerical experimental results that are consistent with theory as far as computational complexity permits, confirming that their algorithm works effectively (Figure \ref{fig:03}).

\noindent
\textbf{4. Theoretical development of implicit graduated optimization algorithm with SGD with momentum.} On the basis that the stochastic noise of SGD is determined by the ratio of learning rate and batch size and that this noise helps to the smooth the objective function, \cite{Sato2023Usi} proposed an algorithm that achieves implicit graduated optimization by varying those parameters during training so that the noise decreases gradually. In this paper, we extend this argument to SGD with momentum, and on the basis that the stochastic noise of SGD with momentum is determined by the learning rate, batch size, and momentum factor \cite{Sato2024Rol}, we constructed an algorithm (Algorithm \ref{alg:gnc3}) to perform implicit graduated optimization in the same way (see Figure \ref{fig:00}). We also analyzed  the algorithm's convergence and showed that it reaches an $\epsilon$-neighborhood of the global optimal solution of the new $\sigma$-nice function $f$ in $\mathcal{O}\left( 1/\epsilon^{\frac{1}{p}} \right)$ $(p \in (0,1])$ rounds (Theorem \ref{thm:02}).

\noindent
\textbf{5. Experimental validation of implicit graduated optimization with SGD with momentum.} We tested the effectiveness of the proposed algorithm by using it to train ResNet18 and WideResNet-28-10 on the CIFAR100 dataset for image classification tasks. We confirmed that the algorithm has a lower loss function value and higher test accuracy than those of vanilla SGD with momentum (Figure \ref{fig:04}). In addition, based on the theory of optimal noise decay rate in graduated optimization, the optimal decaying learning rate for SGD with momentum was tested by training ResNet34 on the ImageNet dataset. The results confirmed that the theoretically optimal learning rate scheduler, a polynomial decay scheduler with a power less than 1, achieves the lowest loss function value (Figure \ref{fig:05}).

\section{Preliminaries}
\textbf{Notation.} Let $\mathbb{N}$ denote the set of non-negative integers. For $m \in \mathbb{N} \backslash \{0\}$, define $[m] := \{1, 2, \ldots, m\}$. The space $\mathbb{R}^d$ is a $d$-dimensional Euclidean space with an inner product $\langle \cdot, \cdot \rangle$, which induces the norm $\| \cdot \|$. The Euclidean closed ball of radius $r$ centered at $\bm{\hat{x}}$ is denoted by $B(\bm{\hat{x}}; r) := \left\{ \bm{x} \in \mathbb{R}^d \colon \| \bm{x} - \bm{\hat{x}} \| \leq r \right\}$. The DNN is parameterized by a vector $\bm{x} \in \mathbb{R}^d$, which is optimized by minimizing the empirical loss function $f(\bm{x}) := \frac{1}{n}\sum_{i \in [n]} f_i(\bm{x})$, where $f_i(\bm{x})$ is the loss function for $\bm{x} \in \mathbb{R}^d$ and the sample $\bm{z}_i$ $(i \in [n])$. Let $\xi$ be a random variable independent of $\bm{x} \in \mathbb{R}^d$, and let $\mathbb{E}_{\xi}[X]$ denote the expectation with respect to $\xi$ of a random variable $X$. The random variable $\xi_{t,i}$ is generated from the $i$-th sampling at time $t$, and $\bm{\xi}_t := (\xi_{t,1}, \xi_{t,2}, \ldots, \xi_{t,b})$ is independent of the sequence $(\bm{x}_k)_{k=0}^{t} \subset \mathbb{R}^d$, where $b$ $(\leq n)$ represents the batch size. The independence of $\bm{\xi}_0, \bm{\xi}_1, \ldots$ allows us to define the total expectation $\mathbb{E}$ as $\mathbb{E} = \mathbb{E}_{\bm{\xi}_0}\mathbb{E}_{\bm{\xi}_1}\cdots\mathbb{E}_{\bm{\xi}_t}$. Let $\mathsf{G}_{\bm{\xi}_t}(\bm{x})$ be the stochastic gradient of $f(\cdot)$ at $\bm{x} \in \mathbb{R}^d$. The mini-batch $\mathcal{S}_t$ consists of $b$ samples $\bm{z}_i$ at time $t$, and the mini-batch stochastic gradient of $f(\bm{x}_t)$ for $\mathcal{S}_t$ is defined as $\nabla f_{\mathcal{S}_t}(\bm{x}_t) := \frac{1}{b}\sum_{i \in [b]} \mathsf{G}_{\xi_{t, i}} (\bm{x}_t)$.

\begin{dfn}
[Smoothed function] \label{dfn:fhat} Given an $L_f$-Lipschitz function $f$, define $\hat{f}_\delta$ to be the function obtained by smoothing $f$ as
\begin{align}\label{eq:06}
\hat{f}_\delta(\bm{x}) := \mathbb{E}_{\bm{u} \sim B(\bm{0}; 1)} \left[f(\bm{x} - \delta \bm{u}) \right],
\end{align}
where $\delta \in \mathbb{R}$ represents the degree of smoothing and $\bm{u}$ is a random variable distributed uniformly over $B(\bm{0}; 1)$. Also, 
\begin{align*}
\bm{x}^\star := \underset{\bm{x} \in \mathbb{R}^d} {\operatorname{argmin}} f(\bm{x}) \text{\ \ and \ }
\bm{x}_{\delta}^\star := \underset{\bm{x} \in \mathbb{R}^d} {\operatorname{argmin}} \hat{f}_\delta (\bm{x}).
\end{align*}
\end{dfn}

There are a total of $M$ smoothed functions in this paper. The largest noise level is $\delta_1$ and the smallest is $\delta_{M+1}=0$. Thus, $\hat{f}_{\delta_{M+1}}=f$.

\begin{dfn}\label{dfn:3.1}
Let $\delta_1 \in \mathbb{R}$. A function $f \colon \mathbb{R}^d \to \mathbb{R}$ is said to be ``new $\sigma$-nice'' if the following conditions hold$\colon$ 

{\em (i)} For all $m \in [M]$ and all $\gamma_m \in (0,1)$, there exist $\delta_m \in \mathbb{R}$ with $|\delta_{m+1}| := \gamma_m|\delta_m|$ and $\bm{x}_{\delta_m}^\star$ such that
\begin{align*}
\left\| \bm{x}_{\delta_m}^\star - \bm{x}_{\delta_{m+1}}^\star \right\| \leq |\delta_m| - |\delta_{m+1}|.
\end{align*}

{\em (ii)} For all $m \in [M]$ and all $\gamma_m \in (0,1)$, there exist $\delta_m \in \mathbb{R}$ with $|\delta_{m+1}| := \gamma_m|\delta_m|$ and $d_m > 1$ such that the function $\hat{f}_{\delta_m} (\bm{x})$ is $\sigma$-strongly convex on $N(\bm{x}^\star; d_m\delta_m)$.
\end{dfn}

\subsection{Assumptions and Lemmas}
\label{subsec:2.2}
We make the following assumptions: 
\begin{assum}\label{assum:03}
{\em (A1)} $f \colon \mathbb{R}^d \to \mathbb{R}$ is continuously differentiable and $L_g$-smooth, i.e., for all $\bm{x}, \bm{y} \in \mathbb{R}^d$,
%\begin{align*}
$\| \nabla f(\bm{x}) - \nabla f(\bm{y}) \| \leq L_g \| \bm{x} - \bm{y} \|.$
%\end{align*}

\noindent
{\em (A2)} $f \colon \mathbb{R}^d \to \mathbb{R}$ is an $L_f$-Lipschitz function, i.e., for all $\bm{x}, \bm{y} \in \mathbb{R}^d$,
%\begin{align*}
$\left|f(\bm{x}) - f(\bm{y}) \right| \leq L_f \| \bm{x} - \bm{y} \|.$
%\end{align*}

\noindent
{\em (A3)} Let $(\bm{x}_t)_{t \in \mathbb{N}} \subset \mathbb{R}^d$ be the sequence generated by an optimizer.

{\em (i)} For each iteration $t$, 
%\begin{align*}
$\mathbb{E}_{\xi_t}\left[ \mathsf{G}_{\xi_t}(\bm{x}_t) \right]
= \nabla f(\bm{x}_t).$
%\end{align*} 

{\em (ii)} There exists a nonnegative constant $C_{\text{opt}}^2$ for an optimizer such that
%\begin{align*}
$\mathbb{E}_{\xi_t}\left[ \| \mathsf{G}_{\xi_t}(\bm{x}_t) - \nabla f(\bm{x}_t) \|^2 \right]
\leq C_{\text{opt}}^2.$
%\end{align*}

\noindent
{\em (A4)} For each iteration $t$, the optimizer samples a mini-batch $\mathcal{S}_t \subset \mathcal{S}$ and estimates the full gradient $\nabla f$ as 
\begin{align*}
\nabla f_{\mathcal{S}_t} (\bm{x}_t)
:= \frac{1}{b} \sum_{i\in [b]} \mathsf{G}_{\xi_{t,i}}(\bm{x}_t)
= \frac{1}{b} \sum_{\{i\colon \bm{z}_i \in \mathcal{S}_t\}} \nabla f_i(\bm{x}_t).
\end{align*}
{\em (A5)} There exists a positive constant $K_{\text{\rm{opt}}}$ for an optimizer, for all $t \in \mathbb{N}$,
%\begin{align*}
$\mathbb{E} \left[ \| \nabla f (\bm{x}_t) \|^2 \right] \leq K_{\text{\rm{opt}}}^2.$
%\end{align*}

\begin{rem}
\rm{Three algorithms appear in this paper: SGD (Algorithm \ref{alg:sgd}), SHB (Algorithm \ref{alg:shb}), and NSHB (Algorithm \ref{alg:nshb}). In assumptions (A3)(ii) and (A5), $C_\text{opt}^2$ and $K_\text{opt}$ represent optimizer-specific constants. For example, $C_\text{SGD}^2$ is the variance of the stochastic gradient when $(\bm{x}_t)_{t \in \mathbb{N}}$ is the sequence generated by SGD.}
\end{rem}
\end{assum}

\section{Explicit Graduated Optimization}
Algorithm \ref{alg:gnc} \cite{Sato2023Usi} below is an embodiment of explicit graduated optimization, where the noise decay rate $\gamma_m := \frac{(M-m)^p}{\left\{ M-(m-1) \right\}^p}$ $(p \in(0,1])$ is that of a polynomial decay scheduler with power $p$, which is theoretically optimal in the sense that it satisfies the conditions necessary to ensure that the graduated optimization algorithm does not leave the strongly convex region of the new $\sigma$-nice function. Algorithm \ref{alg:sgd} is used to optimize each smoothed function. 
\begin{rem}
\rm{In explicit graduated optimization, each smoothed function can be optimized with any algorithm as long as its convergence to a local strongly convex function is guaranteed. We follow the prior work \cite{Sato2023Usi} and use SGD, which includes GD, in Algorithm \ref{alg:sgd}.}
\end{rem}

\begin{table*}[htbp]
\centering
\begin{tabular}{c|c|cccc}
\hline
function's reference &function & EGO(geo) & EGO(nice) & GA & PSO \\
\hline
\cite{Ackley1987Aco, Thomas1993AnO} &Ackley's & 1.80E+01 & \textbf{4.04E-03} & 1.52E+00 & 5.37E+00 \\
\cite{Rahnamayan2007Ano} &Alpine1 & 2.34E-01 & 1.25E-01 & 2.59E-01 & \textbf{2.72E-04} \\
\cite{Molga2005Tes} &Drop-Wave & 9.93E-01 & 9.94E-01 & 9.30E-01 & \textbf{9.10E-01} \\
\cite{Yang2010Eng} &Ellipsoid & 4.22E-01 & 4.82E-04 & 5.35E+00 & \textbf{1.12E-04} \\
\cite{Griewank1981Gen} &Griewank & 2.55E+00 & \textbf{2.32E-03} & 9.88E-03 & 3.86E-02 \\
\cite{Beyer2012Hap} &HappyCat &2.39E+01 & 1.76E+00 & 1.12E+00 & \textbf{5.81E-01} \\
\cite{Ying2016GPU} &HGBat &5.07E-01 & \textbf{5.04E-01} & 1.25E+00 & 5.86E-01 \\
\cite{Plevris2022ACo} &Modified Ridge & 6.63E+00 & 6.63E+00 & 1.73E+00 & \textbf{4.57E-01} \\
\cite{Torn1989Glo, Rudolph1990Glo} &Rastrigin's &1.85E+00 & \textbf{2.26E-02} & 2.44E+01 & 1.51E+02 \\
\cite{Rosenbrock1960AnA, Dixon1994Eff} &Rosenbrock's & 6.00E+98 & 2.44E+127 & 9.74E+01 & \textbf{9.53E+01} \\
\cite{Molga2005Tes} &Rotated Hyper-ellipsoid &5.20E-01 & 4.95E-04 & 6.73E+00 & \textbf{2.53E-05} \\
\cite{Salomon1996Re-} &Salomon &7.24E-01 & \textbf{2.02E-01} & 7.54E-01 & 2.11E+00 \\
\cite{Schaffer1985Mul, Caruana1989Ast} &Schaffer's F7 & 2.68E+01 & 1.12E+01 & \textbf{6.83E-01} & 1.80E+01 \\
\cite{Schwefel1981Num} &Schwefel &8.33E+03 & 8.33E+03 & \textbf{7.10E+03} & 7.20E+03 \\
\cite{Schwefel1981Num} &Schwefel 2.21 &3.76E+01 & \textbf{2.06E-02} & 2.17E+00 & 3.34E+01 \\
\cite{Schumer1968Ada} &Sphere &\textbf{6.98E-07} & 1.58E-05 & 2.34E-01 & 3.46E-06 \\
\hline
\end{tabular}
\vspace*{-5pt}
\caption{Average values of the optimum results over 50 runs, for Algorithm \ref{alg:gnc}, GA, and PSO, for dimensions $D=50$.}
\label{tab:1}
\vspace*{-15pt}
\end{table*}

\begin{algorithm}[H]
\caption{Explicit Graduated Optimization}
\label{alg:gnc}
\begin{algorithmic}
\REQUIRE
$\epsilon, r, \bar{d}, B_2, H_3, H_4 > 0, p \in (0,1], \newline \quad \quad \ \ \ \ \ \bm{x}_1 \in \mathbb{R}^d, b \in [n], \eta > 0$
\STATE{$\delta_1 := \frac{2L_f}{\sigma r}$}
\STATE{$\alpha_0 := \min \left\{ \frac{\sigma r}{8L_f^2 \left(1+\bar{d}\right)}, \frac{\sqrt{\sigma}r}{2\sqrt{2}L_f} \right\}, M^p := \frac{1}{\alpha_0 \epsilon}$}
\FOR{$m=1$ to $M+1$}
\IF{$m \neq M+1$}
\STATE{$\epsilon_m := \sigma \delta_m^2,\ T_F := H_4 / (\epsilon_m - H_3 \eta_m )$}
\STATE{$\gamma_m := \frac{(M-m)^p}{\left\{ M-(m-1) \right\}^p}$}
\ENDIF
\STATE{$\bm{x}_{m+1} := \text{SGD}(T_F, \bm{x}_m, \hat{f}_{\delta_m}, b, \eta)$}
\STATE{$\delta_{m+1} := \gamma_m \delta_m$}
\ENDFOR
\STATE \textbf{return} $\bm{x}_{M+2}$
\end{algorithmic}
\end{algorithm}
\vspace*{-10pt}
\begin{algorithm}[H]
\caption{SGD with constant learning rate}
\label{alg:sgd}
\begin{algorithmic}
\REQUIRE$T_F, \hat{\bm{x}}_1^{(m)} \in \mathbb{R}^d, \hat{f}_{\delta_m}, b \in [n], \eta > 0$
\FOR{$t=1$ to $T_F$}
\STATE{$\hat{\bm{x}}_{t+1}^{(m)} := \hat{\bm{x}}_t^{(m)} - \eta \nabla \hat{f}_{\delta_m, \mathcal{S}_t} (\hat{\bm{x}}_t)$}
\ENDFOR
\STATE \textbf{return} $\hat{\bm{x}}_{T_F +1} = \text{SGD}(T_F, \hat{\bm{x}}_1^{(m)}, \hat{f}_{\delta_m}, b, \eta)$
\end{algorithmic}
\end{algorithm}

If the objective function $f$ is a new $\sigma$-nice function, it is guaranteed that Algorithm \ref{alg:gnc} will converge to an $\epsilon$-neighborhood of the global optimal solution of $f$ in $\mathcal{O}\left( 1/\epsilon^{\frac{1}{p}} \right)$ rounds. Whether Algorithm \ref{alg:gnc} works effectively with test functions for global optimization has not been considered in a previous study. This paper deals with 16 traditional benchmark functions to verify the performance of global optimization algorithms (see Table \ref{tab:11} for the function's name and Appendix B for their definitions). For example, for all $\bm{x} := (x_1, \cdots, x_D)^\top \in \mathbb{R}^D$, Rastrigin's function is defined as follows:
\begin{align*}
f(\bm{x}) :=  \sum_{i=1}^{D} \left\{ x_i^2 -10\cos(2\pi x_i) \right\} + 10D.
\end{align*}

\vspace*{-8pt}
\begin{figure}[htbp]
\begin{minipage}[t]{0.99\hsize}
\centering
\includegraphics[width=0.8\textwidth]{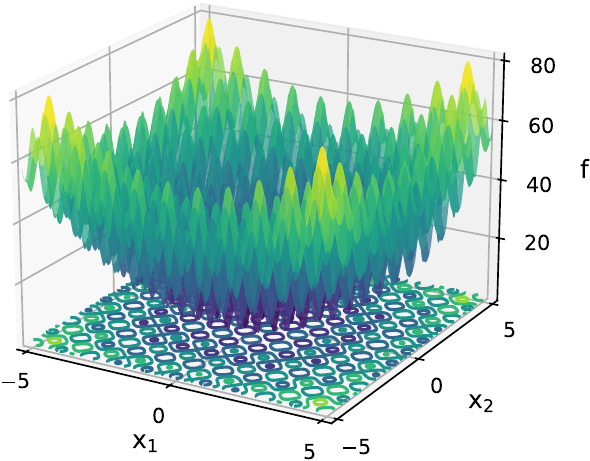}%
\vspace*{-6pt}
\caption{Rastrigin's function of two variables}
\label{fig:rast}
\end{minipage}
\end{figure}
\vspace*{-8pt}

Figure \ref{fig:rast} plots Rastrigin's function for $D=2$. Theorem \ref{thm:01} shows that this function, which has many local optimal solutions, is a new $\sigma$-nice function. The proof of the theorem is in Appendix A.

\begin{thm}\label{thm:01}
Rastrigin's function is a new $\sigma$-nice function.
\end{thm}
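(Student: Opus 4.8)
The plan is to reduce the whole statement to an explicit formula for the smoothed function $\hat f_\delta$. Rastrigin's function is separable, $f(\bm x)=\sum_{i=1}^D\bigl(x_i^2-10\cos(2\pi x_i)\bigr)+10D$, and $\mathbb E_{\bm u}[\cdot]$ is linear, so I would first compute, using the symmetric (non-uniform) law of a single coordinate of $\bm u\sim B(\bm 0;1)$,
\begin{align*}
\hat f_\delta(\bm x) &= \|\bm x\|^2 + \frac{D\delta^2}{D+2} - 10\lambda(\delta)\sum_{i=1}^D\cos(2\pi x_i) + 10D,\\
\lambda(\delta) &:= \mathbb E_{\bm u\sim B(\bm 0;1)}\bigl[\cos(2\pi\delta u_1)\bigr]\in[-1,1],
\end{align*}
where the $\sin$-part of $\cos(2\pi(x_i-\delta u_i))$ vanishes by symmetry of $\bm u$ and $\mathbb E[\|\bm u\|^2]=D/(D+2)$. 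Thus smoothing leaves the strongly convex quadratic part untouched and merely damps the amplitude of the oscillatory part by $\lambda(\delta)$, with $\lambda(0)=1$.

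Next I would fix the initial noise level $\delta_1\in(0,\tfrac14)$; then, whatever ratios $\gamma_m\in(0,1)$ are prescribed, every $\delta_m=\delta_1\prod_{j<m}\gamma_j$ still lies in $(0,\tfrac14)$, and since $|u_1|\le\|\bm u\|\le1$ we get $|2\pi\delta_m u_1|\le2\pi\delta_m<\tfrac{\pi}{2}$, hence $\lambda(\delta_m)>0$. With $\lambda(\delta)>0$ the function $\hat f_\delta$ is separable with coercive one-dimensional profile $h_\delta(t)=t^2-10\lambda(\delta)\cos(2\pi t)$; any critical point $t_0$ obeys $|t_0|=10\pi\lambda(\delta)|\sin(2\pi t_0)|$, so $h_\delta(t_0)=t_0^2-10\lambda(\delta)\cos(2\pi t_0)\ge-10\lambda(\delta)=h_\delta(0)$ with equality only at $t_0=0$. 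Hence $\bm x_{\delta_m}^\star=\bm 0=\bm x^\star$ for every $m$ (and $f$ itself is plainly minimized at $\bm 0$), so $\|\bm x_{\delta_m}^\star-\bm x_{\delta_{m+1}}^\star\|=0\le|\delta_m|-|\delta_{m+1}|$ and Definition~\ref{dfn:3.1}(i) holds trivially.

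For Definition~\ref{dfn:3.1}(ii) I would differentiate the closed form to get the diagonal Hessian $\nabla^2\hat f_\delta(\bm x)=\operatorname{diag}\bigl(2+40\pi^2\lambda(\delta)\cos(2\pi x_i)\bigr)_{i=1}^D$. Choosing $d_m\in\bigl(1,\tfrac1{4\delta_m}\bigr]$ — a nonempty interval because $\delta_m<\tfrac14$ — forces $|x_i|\le d_m\delta_m\le\tfrac14$ on $N(\bm x^\star;d_m\delta_m)$, so $\cos(2\pi x_i)\ge0$ there; together with $\lambda(\delta_m)>0$, every eigenvalue of the Hessian is at least $2$ on that ball. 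Hence $\hat f_{\delta_m}$ is $\sigma$-strongly convex on $N(\bm x^\star;d_m\delta_m)$ with the uniform choice $\sigma=2$, and so Rastrigin's function is new $\sigma$-nice (with $\sigma=2$, for any $\delta_1\in(0,\tfrac14)$).

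The main obstacle is the first step: obtaining the closed form of $\hat f_\delta$, which requires handling the fact that ball-smoothing does not factor coordinatewise even though $f$ does — each $\cos(2\pi x_i)$ must be integrated against the marginal density $\propto(1-s^2)^{(D-1)/2}$ of one coordinate of $\bm u$ — and it is precisely the sign of the resulting factor $\lambda(\delta)$ that dictates the restriction $\delta_1<\tfrac14$: for larger $\delta$, $\lambda(\delta)$ can turn negative, the minimizer can leave the origin, and $\hat f_\delta$ can fail to be convex near it. A minor technicality is that Rastrigin's function is not globally Lipschitz (because of the $\|\bm x\|^2$ term), so Definition~\ref{dfn:fhat} should be read on a bounded domain containing all the balls $N(\bm x^\star;d_m\delta_m)$, on which a finite $L_f$ exists; this does not affect any of the computations above.
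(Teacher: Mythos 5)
Your proof is correct and follows the same overall strategy as the paper's: obtain the smoothed Rastrigin function, note that smoothing leaves the quadratic part intact and merely damps the cosine part by a positive factor, conclude that every smoothed minimizer stays at the origin so condition (i) of Definition~\ref{dfn:3.1} holds trivially, and then bound the diagonal Hessian on a small ball around the origin to get $\sigma$-strong convexity with $\sigma=2$. The genuine difference is the smoothing kernel: the paper evaluates $\hat f_\delta$ under \emph{Gaussian} perturbations, which yields a closed-form exponential damping factor that is positive for every $\delta$, so the origin is the minimizer at all noise levels and the strong-convexity radius can be written explicitly as $r^+=\frac{1}{2\pi}\arccos\left(-\frac{1}{20\pi^2\exp(-2\pi\delta^2)}\right)$, with the choice $\delta_1=0.25$, $d_m\approx 1$ made at the end. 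You instead use the uniform-ball kernel of Definition~\ref{dfn:fhat} itself, which is more faithful to the paper's own definition of $\hat f_\delta$, at the cost that the damping factor $\lambda(\delta)$ is not explicit and can turn negative for large $\delta$; your restriction $\delta_1<\tfrac14$ together with $d_m\in\bigl(1,\tfrac{1}{4\delta_m}\bigr]$ is the price for that, and in exchange condition (ii) follows from the elementary sign observation $\cos(2\pi x_i)\ge 0$ on the ball rather than from locating the zeros of $\hat g_\delta''$. Two small remarks: your critical-point detour is unnecessary, since for $\lambda(\delta)>0$ one has $h_\delta(t)\ge -10\lambda(\delta)=h_\delta(0)$ for \emph{all} $t$ directly; and your observation that Rastrigin's function is not globally Lipschitz applies equally to the paper's argument, which also tacitly works on a bounded region.
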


Theorem \ref{thm:01} implies that Algorithm \ref{alg:gnc} can achieve global optimum of Rastrigin's function. To test the algorithm's performance, we optimized the 16 benchmark functions with it (EGO) with optimal decay rates (nice) and with suboptimal decay rates (geo), where $\gamma_m := c$ $(c \in (0,1))$, i.e., the noise sequence is a geometric progression, and did the same with  two well-known global optimization algorithms, i.e., the genetic algorithm (GA) \cite{Holland1975Ada} and particle swarm optimization (PSO) \cite{Kennedy1995Par}. The dimension $D$ of the objective functions was set to $50$. 
The experimental environment was Intel Core i9 139000KF CPU.

Table \ref{tab:1} shows the average of the optimum results of $50$ runs starting from a random initial point. 
EGO with both decay rates used GD to optimize the smoothed function. The results for GA and PSO are taken from \cite{Plevris2022ACo}. Note that the optimal value for all functions is 0. See Appendix B for the search range of each objective function and the parameters of the Algorithms. The results in the table indicate that Algorithm \ref{alg:gnc} (EGO) significantly outperforms the other methods for several objective functions and that EGO with theoretically optimal decay rates (nice) generally performs better than EGO with suboptimal decay rates (geo). Thus, Algorithm \ref{alg:gnc} with optimal decay rates is effective for traditional global optimization benchmark functions.

However, explicit graduated optimization is not effective for DNNs, such as ResNet \cite{He2016Dee}. Figure \ref{fig:01} shows the results of using Algorithm \ref{alg:gnc} to train ResNet18 on the CIFAR100 dataset for 200 epochs. Note that it used SGD with $\nabla f_{\mathcal{S}_t} (\bm{x}_t + \delta_m \bm{u}_t)$ as the search direction for decreasing $(\delta_m)_{m \in [M]}$, where $\bm{u}_t \in \mathbb{R}^d$ is Gaussian noise and $M=200$; i.e., the noise $\delta_m$ was decreased each epoch.
The experimental environment for training DNN in this paper was as follows: NVIDIA GeForce RTX 4090 $\times$ 1GPU.

\begin{figure}[htbp]
\begin{minipage}[t]{0.99\hsize}
\centering
\includegraphics[width=1\textwidth]{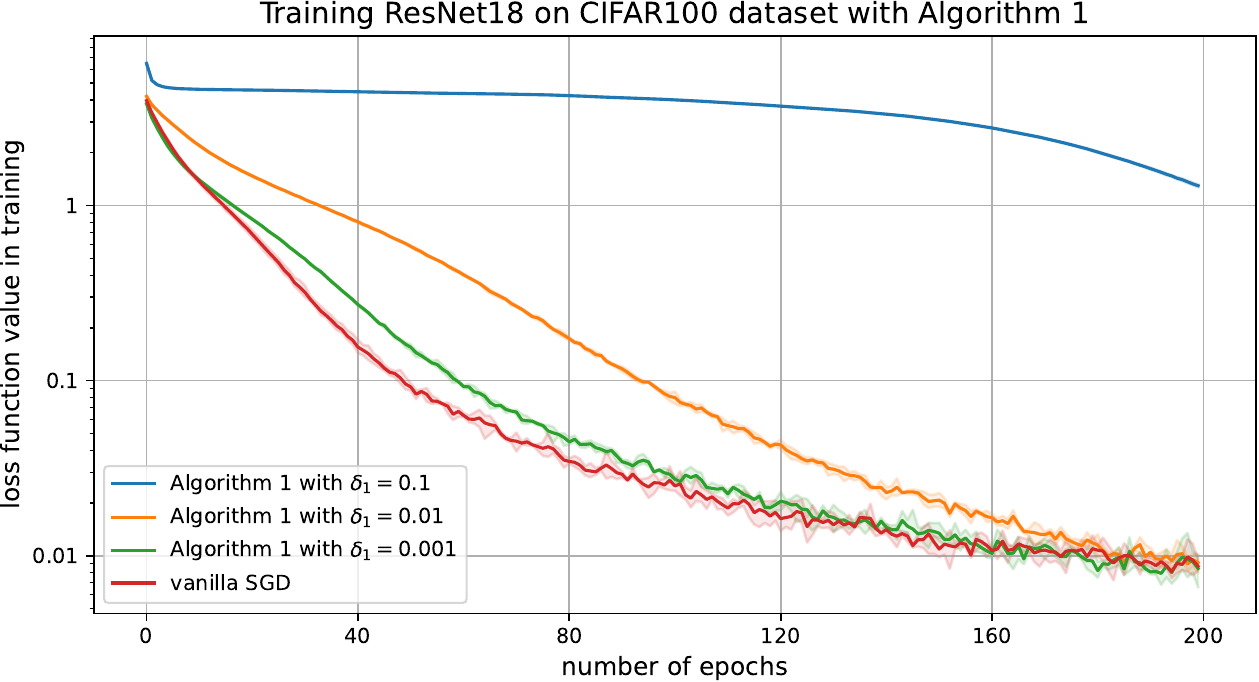}%
\vspace*{-8pt}
\caption{Loss function value for training versus the number of epochs in training ResNet18 on the CIFAR100 dataset. The solid lines represent the mean value, and the shaded areas represent the maximum and minimum values over three runs.}
\label{fig:01}
\end{minipage}
\vspace*{-15pt}
\end{figure}

Compared with vanilla SGD without noise, Algorithm \ref{alg:gnc} with initial noise values $\delta_1$ of 0.1, 0.01, and 0.001 does not achieve significantly lower loss function values; instead, the noise simply interferes with training. In fact, the loss function values are comparable to those of vanilla SGD only when the noise is nearly zero, and it does not fulfill its role of avoiding locally optimal solutions and yielding lower loss function values. 
The cause is theoretically unknown, but the experimental validity of similar methods in the previous study \cite{Elad2016OnG} suggests that the number of parameters in the model is a factor, i.e., that explicit graduated optimization may not be effective for very high dimensional functions. The number of model parameters in the previous study is 23,860, while ours is about 11.2M.
Because of these limitations of explicit graduated optimization, we do not see any benefit to applying it to DNNs.

\section{Implicit Graduated Optimization}
\textbf{Implicit graduated optimization with SGD.} As mentioned above, \cite{Sato2023Usi} showed that simply using SGD generates stochastic noise of level $\delta^{\text{SGD}} = \eta C_{\text{SGD}}^2 /\sqrt{b}$, which smooths the objective function $f$.

Here, let $\bm{y}_{t}$ be the parameter updated by gradient descent (GD) at time $t$, and let $\bm{x}_{t+1}$ be the parameter updated by SGD, i.e., 
\begin{align*}
\bm{y}_{t} &:= \bm{x}_t - \eta \nabla f(\bm{x}_t), \\
\bm{x}_{t+1} &:= \bm{x}_t - \eta \nabla f_{\mathcal{S}_t}(\bm{x}_t) \\
 &= \bm{x}_t - \eta (\nabla f(\bm{x}_t) + \bm{\omega}_t^{\text{SGD}}),
\end{align*}
where $\bm{\omega}_t^{\text{SGD}} := \nabla f_{\mathcal{S}_t} (\bm{x}_t) - \nabla f(\bm{x}_t)$ is stochastic noise of SGD. Then, using Definition \ref{dfn:fhat} and $\mathbb{E}_{\xi_t} \left[ \left\| \bm{\omega}_t^{\text{SGD}} \right\|^2 \right] \leq \frac{C_{\text{SGD}}}{\sqrt{b}}$ which holds under Assumptions (A3)(ii) and (A4), we have
\begin{align}
\mathbb{E}_{\bm{\omega}_t} \left[\bm{y}_{t+1} \right] 
= \mathbb{E}_{\bm{\omega}_t} \left[ \bm{y}_t \right]- \eta \nabla \hat{f}_{\frac{\eta C}{\sqrt{b}}}(\bm{y}_t). \label{eq:2}
\end{align}
See Section 3.3 and Appendix A of \cite{Sato2023Usi} for details on the derivation of equation (\ref{eq:2}).

Therefore, optimizing the objective function $f$ by SGD is equivalent to optimizing its smoothed version $\hat{f}_{\delta^{\text{SGD}}}$ by GD in the sense of expectation. On the basis of this fact, \cite{Sato2023Usi} proposed the following algorithm that implicitly achieves graduated optimization by varying the learning rate $\eta$ and batch size $b$ so as to decrease the degree of smoothing $\delta^{\text{SGD}}$ during training. That is, they decreased the learning rate $\eta$ and increased the batch size $b$ during training, thereby decreasing the degree of smoothing $\delta^{\text{SGD}}$. Algorithm \ref{alg:gnc2} is an implicit graduated optimization algorithm that exploits the natural stochastic noise of SGD. The decay rate of $\delta^{\text{SGD}}$ is set so that the combined effect of decreasing the learning rate $\kappa_m \in (0,1]$ and increasing the batch size $\lambda_m \geq 1$ is $\gamma_m := \frac{(M-m)^p}{\{ M-(m-1)\}^p}$. Algorithm \ref{alg:sgd} (SGD) is used to optimize each smoothed function. 

\vspace*{-8pt}
\begin{algorithm}[H]
\caption{Implicit Graduated Optimization with SGD}
\label{alg:gnc2}
\begin{algorithmic}
\REQUIRE
$\epsilon, r, \bar{d}, B_2, H_3, H_4 > 0, p \in (0,1], \newline \quad \quad \ \ \ \ \ \bm{x}_1 \in \mathbb{R}^d, b_1 \in [n], \eta_1 > 0$
\STATE{$\delta_1 := \frac{\eta_1 C}{\sqrt{b_1}}$}
\STATE{$\alpha_0 := \min \left\{ \frac{\sqrt{b_1}}{4L_f \eta_1C \left(1+\bar{d}\right)}, \frac{\sqrt{b_1}}{\sqrt{2\sigma} \eta_1 C} \right\}, M^p:= \frac{1}{\alpha_0 \epsilon}$}
\FOR{$m=1$ to $M+1$}
\IF{$m \neq M+1$}
\STATE{$\epsilon_m := \sigma^2 \delta_m^2, \ T_F := H_4 / (\epsilon_m - H_3 \eta_m )$}
\STATE{$\gamma_m := \frac{(M-m)^p}{\left\{ M-(m-1)\right\}^p}$}
\STATE{$\kappa_m / \sqrt{\lambda_m} = \gamma_m \ (\kappa_m \in (0,1], \lambda_m \geq 1)$}
\ENDIF
\STATE{$\bm{x}_{m+1} := \text{SGD}(T_F, \bm{x}_m, \hat{f}_{\delta_m}, \eta_m, b_m)$}
\STATE{$\eta_{m+1} := \kappa_m \eta_m, b_{m+1} := \lambda_m b_m$}
\STATE{$\delta_{m+1} := \frac{\eta_{m+1} C}{\sqrt{b_{m+1}}}$}
\ENDFOR
\STATE \textbf{return} $\bm{x}_{M +2}$
\end{algorithmic}
\end{algorithm}
\vspace*{-8pt}

\begin{rem}
\rm{In implicit graduated optimization, the algorithm that optimizes each smoothed function must be GD or SGD. Unlike explicit graduated optimization, which explicitly provides a smoothed function, based on (\ref{eq:2}), the smoothed function can only be optimized by a GD- or SGD-like algorithm. It is unclear which, GD or SGD, is better. Our analysis is valid in both cases, since we provide an analysis of SGD, an extension of GD.}
\end{rem}

If the objective function $f$ is a new $\sigma$-nice function, it is guaranteed that Algorithm \ref{alg:gnc2} converges to an $\epsilon$-neighborhood of the global optimal solution of $f$ in $\mathcal{O}\left( 1/\epsilon^{\frac{1}{p}} \right)$ rounds. Since the prior study lacked numerical experiments to verify the performance of Algorithm \ref{alg:gnc2}, here, we provide experimental results that are faithful to the decay rate $\gamma_m$ employed in Algorithm \ref{alg:gnc2}. Figure \ref{fig:03} shows the results of training ResNet18 on the CIFAR100 dataset with Algorithm \ref{alg:gnc2} with a constant learning rate and constant batch size, i.e. $\kappa_m=1, \lambda_m=1$ (method 1, i.e., vanilla SGD), a decaying learning rate and constant batch size, i.e., $\kappa_m=\gamma_m$ and $\lambda_m=1$ (method 2), a constant learning rate and increasing batch size, $\kappa_m=1, \lambda_m=1/\gamma_m^{13/10}$ (method 3), and a hybrid setting, i.e., $\kappa_m=\gamma_m^{4/9}$ and $\lambda_m=1/\gamma_m$, where the decay rate is set to $\gamma_m = \frac{(M-m)^{0.9}}{\left\{ M - (m-1)\right\}^{0.9}}\ (M=200, m \in [M])$ (method 4). In a 200-epoch training, methods 2, 3, and 4 update the hyperparameters every epoch. In method 1, the learning rate and the batch size are fixed at 0.1 and 128, respectively. In method 2, the initial learning rate is 0.1 and the batch size is fixed at 128. In method 3, the learning rate is fixed at 0.1 and the initial batch size is 32. In method 4, the initial learning rate is 0.1 and the initial batch size is 32. 

\begin{figure}[htbp]
\begin{minipage}[t]{0.99\hsize}
\centering
\includegraphics[width=1\textwidth]{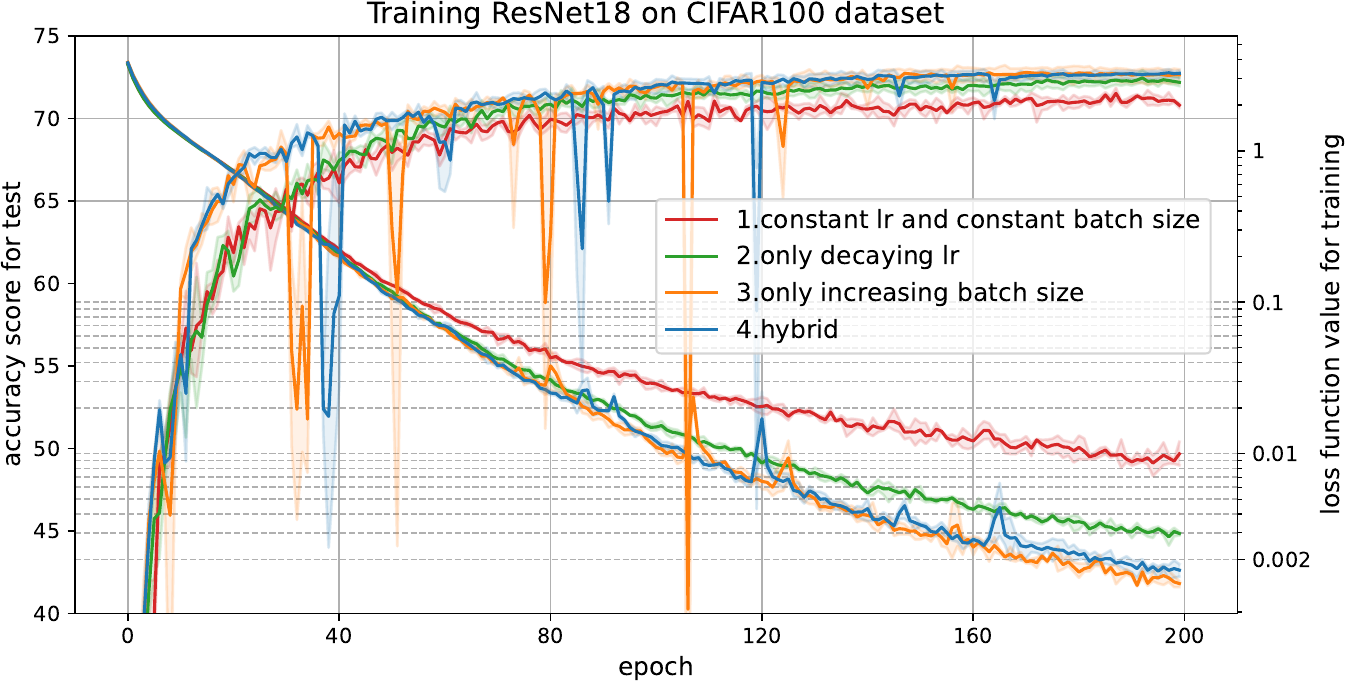}%
\caption{Accuracy score in testing and loss function value in training versus the number of epochs in training ResNet18 on the CIFAR100 dataset with SGD. The solid lines represent the mean value, and the shaded areas represent the maximum and minimum values over three runs.}
\label{fig:03}
\end{minipage}
\vspace*{-5pt}
\end{figure}

Figure \ref{fig:03} indicates that methods 2, 3, and 4 yield a lower loss function and higher test accuracy than those of vanilla SGD (method 1). In particular, the fact that methods 3 and 4, which increase the batch size, are superior in both loss function value and test accuracy suggests that a high learning rate is necessary for successful training. In fact, the loss curve for method 2, which decreases the learning rate, is nearly identical to that of methods 3 and 4 until the middle of training.

\textbf{Implicit graduated optimization with SGD with momentum.} There are a number of variants of SGD with momentum; this paper focuses on the simplest ones, i.e., the stochastic heavy ball (SHB) method and normalized stochastic heavy ball (NSHB) method, which are defined as follows:

\vspace*{-5pt}
\begin{algorithm}[H]%
\caption{Stochastic Heavy Ball (SHB)}
\label{alg:shb}
\begin{algorithmic}
\REQUIRE $\bm{x}_0, \eta>0, \beta \in [0,1), \bm{m}_{-1} := \bm{0}$
\FOR{$t=0$ to $T-1$}
\STATE $\bm{m}_{t} := \nabla f_{\mathcal{S}_t} (\bm{x}_t) + \beta \bm{m}_{t-1}$
\STATE $\bm{x}_{t+1} := \bm{x}_t - \eta \bm{m}_t$
\ENDFOR
\RETURN $\bm{x}_T$
\end{algorithmic}
\end{algorithm}
\vspace*{-15pt}
\begin{algorithm}[H]
\caption{Normalized Stochastic Heavy Ball (NSHB)}
\label{alg:nshb}
\begin{algorithmic}
\REQUIRE $\bm{x}_0, \eta>0, \beta \in [0,1), \bm{d}_{-1} := \bm{0}$
\FOR{$t=0$ to $T-1$}
\STATE $\bm{d}_{t} := (1-\beta)\nabla f_{\mathcal{S}_t} (\bm{x}_t) + \beta \bm{d}_{t-1}$
\STATE $\bm{x}_{t+1} := \bm{x}_t - \eta \bm{d}_t$
\ENDFOR
\RETURN $\bm{x}_T$
\end{algorithmic}
\end{algorithm}
\vspace*{-10pt}

\cite{Sato2024Rol} showed that the SHB and NSHB methods each generate stochastic noise of level,
\begin{align*}
&\delta^{\text{\rm{SHB}}} = \eta \sqrt{\left(1 + \hat{\beta} \right)\frac{C_{\text{SHB}}^2}{b} + \hat{\beta} K_{\text{SHB}}^2}, \\
&\delta^{\text{\rm{NSHB}}} = \eta \sqrt{\frac{1}{1-\beta} \frac{C_{\text{\rm{NSHB}}}^2}{b}},
\end{align*}
which smooths the objective function $f$, where $\hat{\beta} := \frac{\beta(\beta^2 - \beta + 1)}{(1-\beta)^2}$. That is, at time $t$, let $\bm{y}_{t}$ be the parameter updated by GD and $\bm{z}_{t+1}$ be the parameter updated by SHB. Then, the following holds:
\begin{align}
\mathbb{E}_{\bm{\omega}_t^{\text{SHB}}} \left[\bm{y}_{t+1} \right] 
= \mathbb{E}_{\bm{\omega}_t^{\text{SHB}}} \left[ \bm{y}_t \right]- \eta \nabla \hat{f}_{\delta^{\text{SHB}}}(\bm{y}_t),\label{eq:3}
\end{align}
where $\bm{\omega}_t^{\text{SHB}} := \bm{m}_t - \nabla f(\bm{x}_t)$ is stochastic noise of SHB. Since optimizing the objective function $f$ with SHB or NSHB is equivalent (in the sense of expectation) to optimizing its smoothed version, $\hat{f}_{\delta^{\text{SHB}}}$ and $\hat{f}_{\delta^{\text{NSHB}}}$, with GD, we can use the natural stochastic noise of SHB and NSHB to construct an implicit graduated optimization algorithm. %That is, the learning rate $\eta$ and momentum factor $\beta$ are decreased and the batch size $b$ is increased during training, thereby decreasing the degree for smoothing $\delta^{\text{SHB}}$ and $\delta^{\text{NSHB}}$. 
That is, the degree of smoothing $\delta^{\text{SHB}}$ and $\delta^{\text{NSHB}}$ are decreased by decreasing the learning rate $\eta$ and momentum factor $\beta$ and increasing the batch size $b$ during training.

\begin{algorithm}
\caption{Implicit Graduated Optimization with SHB}
\label{alg:gnc3}
\begin{algorithmic}
\REQUIRE$\epsilon >0, p \in (0,1], \bar{d} > 0, \bm{x}_1 \in \mathbb{R}^d, \eta_1 > 0, \newline \quad \quad \ \ \ \ \  b_1 \in [n], \beta_1 \in [0,1), C_{\text{\rm{SHB}}}^2 > 0, K_{\text{\rm{SHB}}} > 0$
\STATE{$\hat{\beta}_t := \frac{\beta_t(\beta_t^2 - \beta_t + 1)}{(1-\beta_t)^2} \ (\forall t \in \mathbb{N})$}
\STATE{$\delta_1^{\text{\rm{SHB}}} := \eta_1 \sqrt{(1 + \hat{\beta}_1)\frac{C_{\text{\rm{SHB}}}^2}{b_1} + \hat{\beta}_1 K_{\text{\rm{SHB}}}^2}$}
\STATE{$\alpha_0 := \min \left\{ \frac{1}{4L_f \left| \delta_1^{\text{\rm{SHB}}} \right| \left(1+\bar{d}\right)}, \frac{1}{\sqrt{2}\sigma \left| \delta_1^{\text{\rm{SHB}}}\right|} \right\}, M^p:= \frac{1}{\alpha_0 \epsilon}$}
\FOR{$m=1$ to $M+1$}
\IF{$m \neq M+1$}
\STATE{$\epsilon_m := \sigma^2 {\delta_m^{\text{\rm{SHB}}}}^2, \ T_F := H_4 / \left( \epsilon_m - H_3 \eta_m \right)$}
\STATE{$\gamma_m := \frac{(M-m)^p}{\left\{ M-(m-1)\right\}^p}$}
\STATE{$\frac{\kappa_m \eta_m \sqrt{(1 + \rho_m \hat{\beta}_m) \frac{C_{\text{\rm{SHB}}}^2}{\lambda_m b_m} + \rho_m \hat{\beta}_m K_{\text{\rm{SHB}}}^2}}{\eta_m \sqrt{(1 + \hat{\beta}_m) \frac{C_{\text{\rm{SHB}}}^2}{b_m} + \hat{\beta}_m K_{\text{\rm{SHB}}}^2}} = \gamma_m$ \\\quad\qquad\qquad\qquad\qquad$(\kappa_m, \rho_m \in (0,1], \lambda_m \geq 1)$}
\ENDIF
\STATE{$\bm{x}_{m+1} := \text{SGD}(T_F, \bm{x}_m, \hat{f}_{\delta_m}, \eta_m, b_m)$}
\STATE{$\eta_{m+1} := \kappa_m \eta_m, b_{m+1} := \lambda_m b_m, \hat{\beta}_{m+1} := \rho_m \hat{\beta}_m$}
\STATE{$\delta_{m+1}^{\text{\rm{SHB}}} := \eta_{m+1} \sqrt{(1 + \hat{\beta}_{m+1})\frac{C_{\text{\rm{SHB}}}^2}{b_{m+1}} + \hat{\beta}_{m+1} K_{\text{\rm{SHB}}}^2}$}
\ENDFOR
\RETURN $\bm{x}_{M +2}$
\end{algorithmic}
\end{algorithm}

Algorithm \ref{alg:gnc3} is an implicit graduated optimization that exploits the natural stochastic noise of SHB. The decay rate of $\delta^{\text{SHB}}$ is set so that the combined effect of decreasing the learning rate $\kappa_m \in (0,1]$ and momentum factor $\rho \in (0,1]$ and increasing the batch size $\lambda_m \geq 1$ is $\gamma_m := \frac{(M-m)^p}{\{ M-(m-1)\}^p}$.

The following theorem guarantees the convergence of Algorithm \ref{alg:gnc3} for the new $\sigma$-nice function (See Appendix C for details on the proof of Theorem \ref{thm:02}).

\begin{thm}
[Convergence analysis of Algorithm \ref{alg:gnc3}]\label{thm:02} Let $\epsilon \in (0, 1]$ and $f$ be an $L_f$-Lipschitz new $\sigma$-nice function. Suppose that we apply Algorithm \ref{alg:gnc2}; after $\mathcal{O}\left( 1/\epsilon^{\frac{1}{p}}\right)$ rounds. Then, the algorithm reaches an $\epsilon$-neighborhood of the global optimal solution $\bm{x}^\star$.
\end{thm}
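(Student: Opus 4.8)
The plan is to reduce the statement to the round-by-round telescoping analysis already established for Algorithm~\ref{alg:gnc2} in \cite{Sato2023Usi}, changing only the formula for the smoothing level. By identity~(\ref{eq:3}), running SHB on $f$ for one round with parameters $(\eta_m,b_m,\beta_m)$ is, in expectation, the same as running GD on the smoothed function $\hat{f}_{\delta_m^{\mathrm{SHB}}}$; hence the $m$-th round of Algorithm~\ref{alg:gnc3} amounts to applying a constant-step-size gradient method to $\hat{f}_{\delta_m}$ (write $\delta_m := \delta_m^{\mathrm{SHB}}$) for $T_F$ steps, from $\bm{x}_m$ to $\bm{x}_{m+1}$. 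The goal is to prove by induction on $m$ that $\bm{x}_m$ stays in the set $N(\bm{x}^\star;d_m\delta_m)$ on which $\hat{f}_{\delta_m}$ is $\sigma$-strongly convex (Definition~\ref{dfn:3.1}(ii)), while the residual error shrinks in proportion to $|\delta_m|$; since the schedule drives $|\delta_m|$ to $0$ and $\hat{f}_{0}=f$, this forces the returned iterate into an $\epsilon$-neighborhood of $\bm{x}^\star$.

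\emph{Per-round estimate.} Assuming $\bm{x}_m\in N(\bm{x}^\star;d_m\delta_m)$, there $\hat{f}_{\delta_m}$ is $\sigma$-strongly convex and, by (A1), $L_g$-smooth, so the constant-step-size convergence bound encoded by $H_3,H_4$ gives, after $T_F=H_4/(\epsilon_m-H_3\eta_m)$ steps, $\mathbb{E}[\hat{f}_{\delta_m}(\bm{x}_{m+1})]-\hat{f}_{\delta_m}(\bm{x}_{\delta_m}^\star)\leq\epsilon_m=\sigma^2\delta_m^2$. Strong convexity turns this into $\mathbb{E}[\|\bm{x}_{m+1}-\bm{x}_{\delta_m}^\star\|^2]\leq 2\epsilon_m/\sigma=2\sigma\delta_m^2$, hence $\mathbb{E}[\|\bm{x}_{m+1}-\bm{x}_{\delta_m}^\star\|]\leq\sqrt{2\sigma}\,|\delta_m|$.

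\emph{Propagation.} Summing Definition~\ref{dfn:3.1}(i) over the remaining indices gives $\|\bm{x}_{\delta_{m+1}}^\star-\bm{x}^\star\|\leq|\delta_{m+1}|$, and (i) itself gives $\|\bm{x}_{\delta_m}^\star-\bm{x}_{\delta_{m+1}}^\star\|\leq|\delta_m|-|\delta_{m+1}|=(1-\gamma_m)|\delta_m|$. The triangle inequality and $|\delta_{m+1}|=\gamma_m|\delta_m|$ then yield $\mathbb{E}[\|\bm{x}_{m+1}-\bm{x}^\star\|]\leq\sqrt{2\sigma}\,|\delta_m|+(1-\gamma_m)|\delta_m|+\gamma_m|\delta_m|=(\sqrt{2\sigma}+1)|\delta_m|=(\sqrt{2\sigma}+1)\gamma_m^{-1}|\delta_{m+1}|$, so choosing $d_{m+1}$ large enough (it stays bounded because $\gamma_m\geq\gamma_{M-1}=2^{-p}$) keeps $\bm{x}_{m+1}\in N(\bm{x}^\star;d_{m+1}\delta_{m+1})$; the base case is the choice of $\delta_1^{\mathrm{SHB}}$. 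The factor $1+\bar{d}$ and the bound $\alpha_0\leq 1/(\sqrt{2}\,\sigma|\delta_1^{\mathrm{SHB}}|)$ in Algorithm~\ref{alg:gnc3} are exactly what makes these constants consistent. Unrolling the decay gives $|\delta_m|=|\delta_1|\prod_{k<m}\gamma_k$, so the smallest positive level is $|\delta_M|=|\delta_1|M^{-p}$; with $M^p=1/(\alpha_0\epsilon)$ this is $|\delta_1|\alpha_0\epsilon$, and therefore $\mathbb{E}[\|\bm{x}_{M+1}-\bm{x}^\star\|]\leq(\sqrt{2\sigma}+1)|\delta_1|\,\alpha_0\,\epsilon=\mathcal{O}(\epsilon)$. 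The number of rounds is $M+1=\mathcal{O}((\alpha_0\epsilon)^{-1/p})=\mathcal{O}(\epsilon^{-1/p})$, as claimed.

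\emph{Main obstacle.} The delicate step is maintaining the induction hypothesis for every $m$: one must check that the accumulated optimization error $\sqrt{2\sigma}|\delta_m|$ together with the drift $(1-\gamma_m)|\delta_m|$ between consecutive smoothed minimizers never leaves the \emph{shrinking} neighborhood $N(\bm{x}^\star;d_{m+1}\delta_{m+1})$, which is precisely what fixes $\delta_1^{\mathrm{SHB}}$, $\alpha_0$, and the schedule $\gamma_m$. What is genuinely new compared with Algorithm~\ref{alg:gnc2} is that the prescribed decay $\gamma_m$ must now be realized jointly by the learning rate $\kappa_m$, the momentum factor $\rho_m$, and the batch size $\lambda_m$ through the ratio constraint in Algorithm~\ref{alg:gnc3}, using $\delta^{\mathrm{SHB}}=\eta\sqrt{(1+\hat{\beta})C_{\mathrm{SHB}}^2/b+\hat{\beta}K_{\mathrm{SHB}}^2}$ together with (A5) to keep the extra $\hat{\beta}K_{\mathrm{SHB}}^2$ term under control; one also needs $\epsilon_m>H_3\eta_m$ in every round so that $T_F$ is well-defined and finite. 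Apart from these points, the argument is the same telescoping as in \cite{Sato2023Usi}.
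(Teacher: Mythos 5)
Your proposal is sound in substance and reaches the theorem's conclusion, but its endgame is genuinely different from the paper's. The paper only analyzes the final stage: starting from $\mathbb{E}[\hat f_{\delta_M}(\bm{x}_{M+1})-\hat f_{\delta_M}(\bm{x}^\star_{\delta_M})]\le\sigma^2{\delta_M}^2$ (Theorem \ref{thm:d.1}), it bounds the \emph{function-value} gap $\mathbb{E}[f(\bm{x}_{M+2})-f(\bm{x}^\star)]$ by inserting $\hat f_{\delta_M}$ twice via Lemma \ref{lem:d.2} (each costing $L_f|\delta_M|$), using Lemma \ref{lem:d.1} together with the diameter bound $\|\bm{x}_{M+2}-\bm{x}_{M+1}\|\le 2\bar d\,|\delta_M|$, and then gets exactly $\le\epsilon$ because the term $1/(4L_f(1+\bar d)|\delta_1^{\mathrm{SHB}}|)$ in $\alpha_0$ is tailored to that decomposition; it also sums $T_F$ over all stages to show the total query count $T_{\mathrm{total}}$ is $\mathcal{O}(\epsilon^{-1/p})$. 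You instead convert the per-stage gap into a \emph{distance} via $\sigma$-strong convexity and telescope condition (i) of Definition \ref{dfn:3.1} down to $\delta_{M+1}=0$ to place $\bm{x}^\star_{\delta_m}$ within $|\delta_m|$ of $\bm{x}^\star$, yielding $\mathbb{E}\|\bm{x}_{M+1}-\bm{x}^\star\|\le(\sqrt{2\sigma}+1)|\delta_1^{\mathrm{SHB}}|\alpha_0\epsilon$; this buys a metric statement and makes the stay-in-the-strongly-convex-region induction explicit, which the paper leaves implicit (it simply asserts $\bm{x}_{M+1},\bm{x}_{M+2}\in N(\bm{x}^\star;d_M|\delta_M|)$, deferring the schedule constraint to \cite{Sato2023Usi}). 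Two caveats on your route: with the algorithm's actual $\alpha_0$ your final constant is $(\sqrt{2\sigma}+1)/(\sqrt{2}\sigma)$, so you obtain an $\mathcal{O}(\epsilon)$-neighborhood rather than radius exactly $\epsilon$ (the $L_f(1+\bar d)$ piece of $\alpha_0$ plays no role in your bound, contrary to your claim that it is what makes the constants consistent), and your containment $\bm{x}_{m+1}\in N(\bm{x}^\star;d_{m+1}\delta_{m+1})$ follows only from an in-expectation bound, so strictly it needs a Markov-type or high-probability step --- a sloppiness the paper shares. Finally, you count only the $M+1$ outer rounds, whereas the paper additionally verifies that the cumulative number of inner SGD iterations is also $\mathcal{O}(\epsilon^{-1/p})$; adding that summation would make your argument fully parallel in strength.
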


To test the ability of Algorithm \ref{alg:gnc3} to reduce stochastic noise, we compared it with a vanilla SHB method in which the learning rate, batch size, and momentum are all constant. Here, Algorithm \ref{alg:gnc3} used a noise reduction method in which the hyperparameters are updated to reduce the degree of smoothing $\delta^{\text{\rm{SHB}}}$. 

\begin{figure}[htbp]
\begin{minipage}[t]{0.99\hsize}
\centering
\includegraphics[width=1\textwidth]{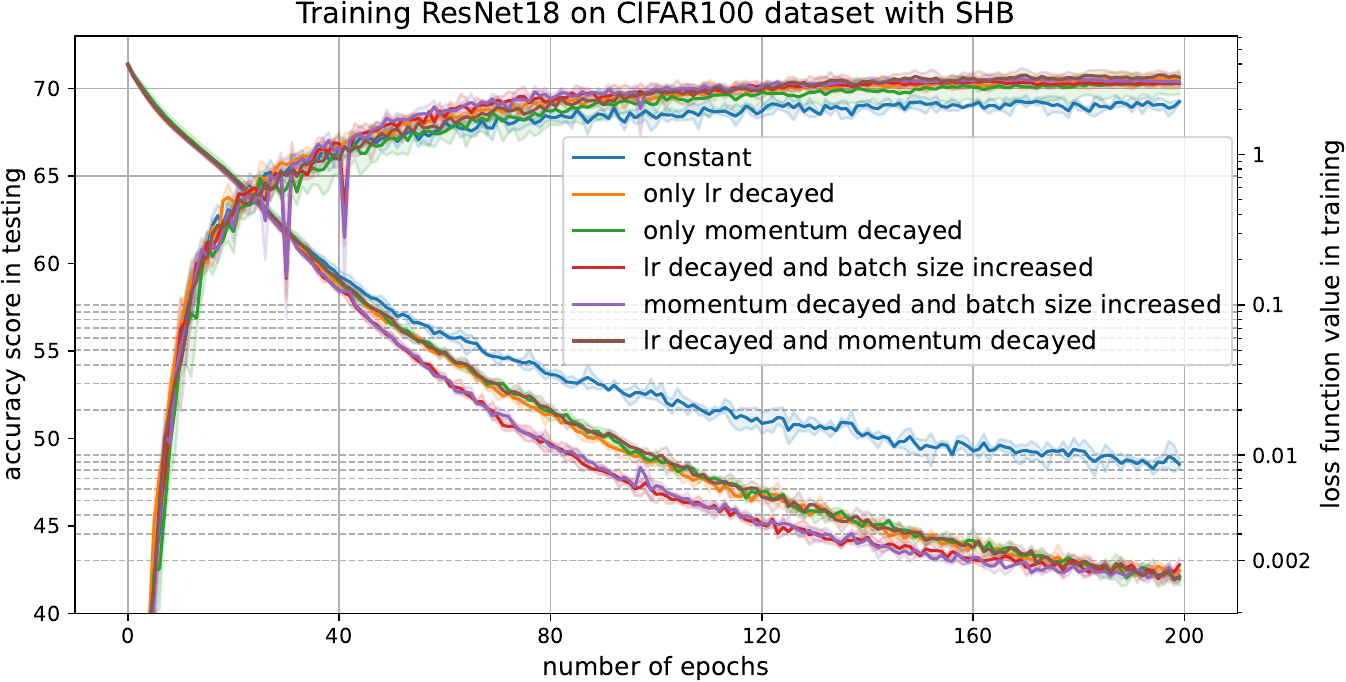}%
\caption{Accuracy score in testing and loss function value in training ResNet18 on the CIFAR100 dataset with Algorithm \ref{alg:gnc3} versus the number of epochs. The blue plot represents vanilla SHB, and the other five plots represent Algorithm \ref{alg:gnc3}. ``lr'' means the learning rate. The solid lines represent the mean value, and the shaded areas represent the maximum and minimum values over three runs.}
\label{fig:04}
\end{minipage}
\end{figure}

Figure \ref{fig:04} plots the accuracy in testing and the loss function value in training ResNet18 on the CIFAR100 dataset with SHB versus the number of epochs. Here, Algorithm \ref{alg:gnc3} outperformed vanilla SHB in both test accuracy and loss function value, thereby demonstrating that it is superior to SGD with momentum using constant parameters on image classification tasks. We confirmed similar results in training WideResNet-28-10 on the CIFAR100 dataset with SHB (see Figure \ref{fig:08} in Appendix D). See Figure \ref{fig:999} for the results of comparative experiments with warmup methods and others.

Appendix E analyzes the implicit graduated optimization algorithm using stochastic noise in NSHB and presents the results of numerical experiments.

\noindent
\textbf{Optimal decaying learning rates for SGD with momentum.} In graduated optimization, the optimal decay rate of noise is $\gamma_m := \frac{(M-1)^p}{\{ M-(m-1)\}^p}$ $(p \in (0,1])$ \cite{Sato2023Usi}. In implicit graduated optimization, the stochastic noise $\delta^{\text{opt}}$ is proportional to the learning rate $\eta$, so the optimal learning rate decay rate is likewise $\gamma_m$, i.e., a polynomial decay with a power less than $p = 1$. \cite{Sato2023Usi} conducted experiments on the optimal learning rate for SGD and found that a polynomial decay with a power less than $p=1$ achieves the smallest loss function value and yields the highest test accuracy, as theory suggests.

\begin{figure}[htbp]
\begin{minipage}[t]{0.99\hsize}
\centering
\includegraphics[width=1\textwidth]{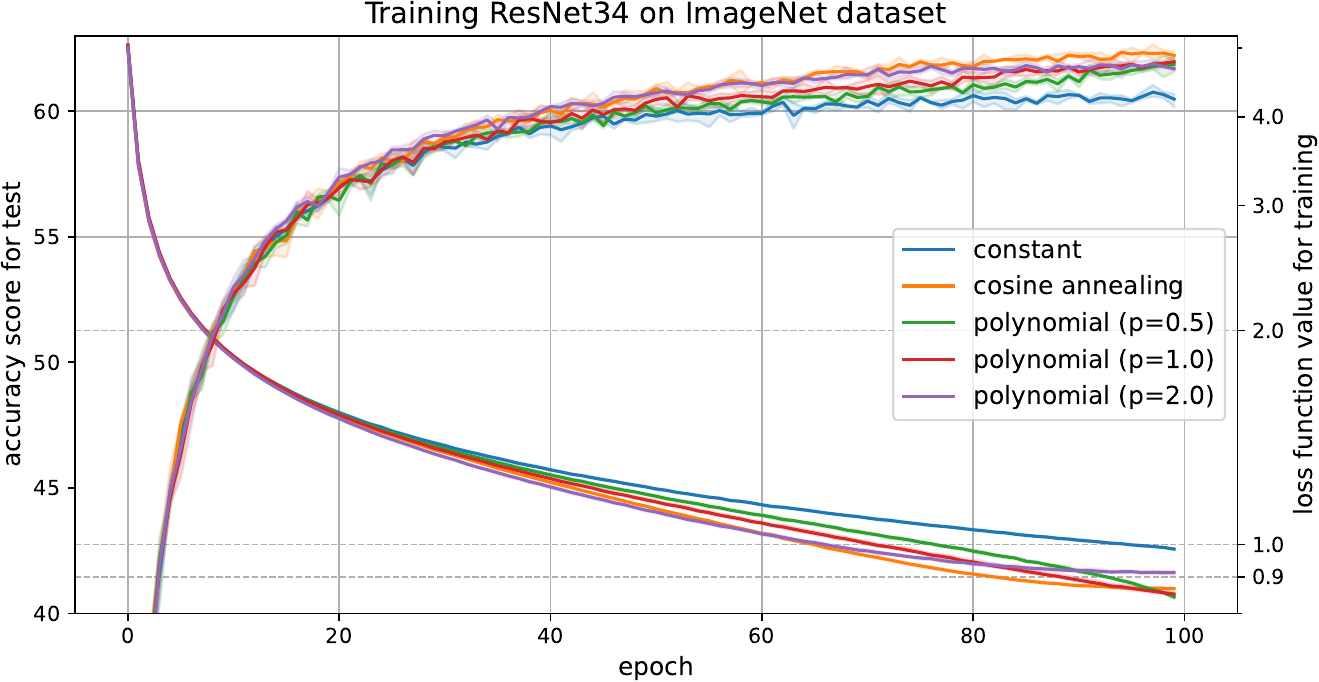}%
\caption{Accuracy score in testing and loss function value in training of ResNet34 on ImageNet dataset with SHB (Algorithm \ref{alg:shb}) versus the number of epochs. The solid lines represent the mean value, and the shaded areas represent the maximum and minimum values over three runs.}
\label{fig:05}
\end{minipage}
\end{figure}

In order to see which learning rate scheduler gives the smallest loss function value for SGD with momentum, we trained ResNet34 \cite{He2016Dee} on the ImageNet dataset \cite{Deng2009Ima} with SHB for 100 epochs. The results in Figure \ref{fig:05} indicate that a polynomial decay with a power less than or equal to 1 achieves the smallest training loss function.
We confirmed similar results in training ResNet18 and WideResNet-28-10 on the CIFAR100 dataset with SHB and NSHB (see Figure \ref{fig:13}-\ref{fig:16} in Appendix F).

\section{Conclusion}
We showed that Rastrigin's function, one of the classical benchmark functions, is a new $\sigma$-nice function and that explicit graduated optimization with optimal noise scheduling is valid for traditional benchmark functions, but not effective for DNNs. Then, we developed an implicit graduated optimization algorithm using stochastic noise in SGD with momentum and analyzed its convergence. Finally, we conducted experiments on ImageNet on the optimal decaying learning rate scheduler for SGD with momentum and found that the theoretically optimal polynomial decay scheduler with a small power achieves the lowest loss function value. The most important advantage of our approach is that we can theoretically guarantee that SGD with momentum will converge to a global optimal solution only by increasing or decreasing hyperparameters. This has not been achieved by any existing methods. The contributions that were made by extending the implicit graduated optimization approach of SGD to that of SGD with momentum with stochastic noise include the following:
\begin{itemize}
\item Previous theories have been able to guarantee convergence to a local optimal solution for both $\beta \to 0$ and $\beta \to 1$\cite{Gitman2019Und}. By introducing the momentum factor into the implicit graduated optimization approach, we theoretically show that $\beta \to 0$ contributes to convergence to the global optimal solution, and we demonstrate its effectiveness in experiments.
\item From the fact that the degree of smoothing in SGD with momentum is determined by hyperparameters (see Figure 1), when graduated optimization is introduced, the knowledge of the optimal decay rate of the degree of smoothing leads directly to the knowledge of the optimal decay of hyperparameters such as the learning rate and momentum factor.
\end{itemize}
These findings can only be derived from an implicit graduated optimization perspective, and such a theory is expected to help users set hyperparameters intuitively.

%\section{Future Work}
%We have examined the effectiveness of stepwise optimization in training DNNs and its limitations.
%We have shown that implicit graduated optimization, in contrast to explicit graduated optimization, works well for training DNNs, yielding significantly lower loss function values and slightly higher test accuracy.
%We have dealt with the general empirical risk minimization (ERM) problem, which suggests that, in general, sufficiently low loss function values do not necessarily lead to sufficiently high generalization performance, but rather that the smoothness of the function around the approximate solution to which the optimizer converges is closely related to the generalization performance \cite{Shirish2017OnL, Izmaliov2018Ave, Li2018Vis}.
%Therefore, to take full advantage of the power of global optimization by the graduated optimization algorithm, it may be useful to apply the algorithm to problems where low loss function values lead to high generalization performance, such as Sharpness-Aware Minimization (SAM) \cite{Foret2021Sha}.
%Applying the graduated optimization algorithm to such non-ERM loss minimization problems is an interesting future work and  may help further development of machine learning and artificial intelligence.

\section{Acknowledgements}
We are sincerely grateful to Program Chairs, Area Chairs, and anonymous reviewers for helping us improve the original manuscript. This research is partly supported by the computational resources of the DGX A100 named TAIHO at Meiji University.
This work was supported by the Japan Society for the Promotion of Science (JSPS) KAKENHI Grant Number 24K14846 awarded to Hideaki Iiduka.

\bibliography{aaai25}

\clearpage
\onecolumn
\section{A. Proof of Theorem \ref{thm:01}}
\label{sec:A}
\begin{proof}
Recall that Rastrigin's function is defined as 
\begin{align*}
f(\bm{x}) 
&:=  \sum_{i=1}^{D} \left\{ x_i^2 -10\cos(2\pi x_i) \right\} + 10D \\
&= \| \bm{x} \|^2 -10\sum_{i=1}^{D}\cos(2\pi x_i) + 10D
\end{align*}
First, we need to derive a smoothed version of Rastrigin's function $\hat{f}_{\delta}(\bm{x})$.
\begin{align*}
\hat{f}_{\delta}(\bm{x}) 
&:= \mathbb{E}_{\bm{u} \sim \mathcal{N}\left(\bm{0}, \frac{1}{\sqrt{D}}I_D \right)} \left[ f(\bm{x}+\delta \bm{u})\right] \\
&= \mathbb{E}_{\bm{u} \sim \mathcal{N}\left(\bm{0}, \frac{1}{\sqrt{D}}I_D \right)} \left[ \| \bm{x}+\delta \bm{u} \|^2 - 10\sum_{i=1}^{D} \cos\left\{ 2\pi(x_i + \delta u_i)\right\} +10D\right] \\
&= \mathbb{E}_{\bm{u} \sim \mathcal{N}\left(\bm{0}, \frac{1}{\sqrt{D}}I_D \right)} \left[ \| \bm{x} \|^2 + 2\delta \langle \bm{x}, \bm{u}\rangle + \delta^2 \| \bm{u} \|^2 \right] - 10\sum_{i=1}^{D} \mathbb{E}_{u_i \sim \mathcal{N}\left(0, \frac{1}{\sqrt{D}} \right)} \left[ \cos\left\{ 2\pi(x_i + \delta u_i)\right\} \right] + 10D \\
&= \| \bm{x} \|^2 - 10\sum_{i=1}^{D} \mathbb{E}_{u_i \sim \mathcal{N}\left(0, \frac{1}{\sqrt{D}} \right)} \left[ \cos\left\{ 2\pi(x_i + \delta u_i)\right\} \right] + 10D + C,
\end{align*}
where we use $C := \delta^2 \mathbb{E}_{\bm{u}}\left[ \| \bm{u} \|^2 \right], \mathbb{E}_{\bm{u}}\left[ \bm{u} \right] = \bm{0}$. Now, let $y_i := x_i + \delta u_i$. Since $y_i \sim \mathcal{N}\left( x_i, \delta^2 \right)$, we have
\begin{align*}
\mathbb{E}_{u_i \sim \mathcal{N}\left(0, \frac{1}{\sqrt{D}} \right)} \left[ \cos\left\{ 2\pi(x_i + \delta u_i)\right\} \right]
&= \mathbb{E}_{y_i} \left[ \cos(2\pi y_i) \right] \\
&= \int_{-\infty}^{\infty} \cos\left\{ 2\pi y_i\right\} \frac{1}{\sqrt{2\pi\delta^2}} \exp\left( -\frac{(y_i-x_i)^2}{2\delta^2}\right) dy_i.
\end{align*}
Next, let $z_i := \frac{y_i -x_i}{\delta}$. From $dy_i = \delta dz_i$, we have
\begin{align*}
\mathbb{E}_{u_i \sim \mathcal{N}\left(0, \frac{1}{\sqrt{D}} \right)} \left[ \cos\left\{ 2\pi(x_i + \delta u_i)\right\} \right]
&= \frac{1}{\sqrt{2\pi}} \int_{-\infty}^{\infty} \cos\left\{ 2\pi(x_i + z_i \delta)\right\} \exp\left( -\frac{z_i^2}{2}\right) dz_i \\
&= \frac{1}{\sqrt{2\pi}} \int_{-\infty}^{\infty} \cos(2\pi x_i)\cos(2\pi z_i \delta) \exp\left( -\frac{z_i^2}{2}\right) dz_i \\
&\quad- \frac{1}{\sqrt{2\pi}} \int_{-\infty}^{\infty} \sin(2\pi x_i)\sin(2\pi z_i \delta) \exp\left( -\frac{z_i^2}{2}\right) dz_i  \\
&=\frac{\cos(2\pi x_i)}{\sqrt{2\pi}} \int_{-\infty}^{\infty} \cos(2\pi z_i \delta) \exp\left( -\frac{z_i^2}{2}\right) dz_i \\
&\quad- \frac{\sin(2\pi x_i)}{\sqrt{2\pi}} \int_{-\infty}^{\infty} \sin(2\pi z_i \delta) \exp\left( -\frac{z_i^2}{2}\right) dz_i \\
&=\frac{\cos(2\pi x_i)}{\sqrt{2\pi}} \int_{-\infty}^{\infty} \cos(2\pi z_i \delta) \exp\left( -\frac{z_i^2}{2}\right) dz_i \\
&=\frac{\cos(2\pi x_i)}{\sqrt{2\pi}} \sqrt{2\pi}\exp\left( -\frac{4\pi^2 \delta^2}{2}\right) \\
&=\cos(2\pi x_i) \exp\left(-2\pi^2 \delta^2 \right).
\end{align*}
Therefore,
\begin{align*}
\hat{f}_{\delta}(\bm{x}) = \| \bm{x} \|^2 - 10 \exp\left( -2\pi \delta^2 \right) \sum_{i=1}^{D} \cos(2\pi x_i) + 10D + C.
\end{align*}
To prove that Rastrigin's function $f$ is a new $\sigma$-nice function, it is sufficient to show that the following one-dimensional function $g$ is a new $\sigma$-nice function:
\begin{align*}
g(x) := x^2 - 10\cos(2\pi x)+10.
\end{align*}
Also, a smoothed version of $g$ can be derived as
\begin{align*}
\hat{g}_{\delta}(x) := x^2 - 10\exp\left(-2\pi \delta^2 \right) \cos(2\pi x)+10.
\end{align*}
For any $\delta \in \mathbb{R}$, $g$ has a global minimum at $x=0$, so $x_{\delta}^\star = 0$ holds for all $\delta \in \mathbb{R}$. Thus, for all $\delta_m \in \mathbb{R}$, 
\begin{align*}
\left|x_{\delta_m}^\star - x_{\delta_{m+1}}^\star \right| = |0-0| = 0 \leq |\delta_m| - |\delta_{m+1}|,
\end{align*}
which implies the first condition of new $\sigma$-niceness holds. Next, we show that the second condition of new $\sigma$-niceness holds, i.e., that a smoothed function $\hat{g}_{\delta}$ is $\sigma$-strongly convex in $N(x^\star; d_m \delta_m) (d_m > 1)$. The derivative of $\hat{g}_{\delta}$ is obtained as follows:
\begin{align*}
\hat{g}_{\delta}'(\bm{x}) &= 2x + 20\pi \exp(-2\pi \delta^2)\sin(2\pi x) \\
\hat{g}_{\delta}''(\bm{x}) &= 2 + 40\pi^2 \exp(-2\pi \delta^2)\cos(2\pi x) \\
\hat{g}_{\delta}'''(\bm{x}) &= -80\pi^3 \exp(-2\pi \delta^2)\sin(2\pi x).
\end{align*}
When $\delta > \delta^\star := \sqrt{\frac{1}{2\pi}\log(20\pi^2)} \approx 0.917$, $\hat{g}_{\delta}''(x)>0$ holds. Thus, for all $\delta > \delta^\star$ and all $x \in \mathbb{R}$, the smoothed function $\hat{g}_{\delta}(x)$ is $(2-40\pi^2\exp(-2\pi \delta^2))$-strongly convex. For example, $\hat{g}_{\delta}$ is $1.26$-strongly convex when $\delta=1.0$.

$\hat{g}_{\delta}(x)'''=0$ when $x=0, \pm0.5, \pm1, \pm1.5, \cdots$. Hence, when $\delta \leq \delta^\star$, $\hat{g}_{\delta}'''$ has multiple intersections with the $x$-axis. The $x$-coordinate of those intersections that is closest to $x=0$ is the maximum radius $r^+$ of the strongly convex region centered at $x^\star$ of $\hat{g}_{\delta}$. Solving for $\hat{g}_{\delta}''=0$, we find that 
\begin{align*}
r^+ = \frac{1}{2\pi} \arccos\left( -\frac{1}{20\pi^2 \exp(-2\pi \delta^2)} \right).
\end{align*}
Thus, $r^+ \approx 0.2507$ when $\delta=0$ and $r^+ \approx 0.5$ when $\delta=\delta^\star$. Therefore, if $\delta_1=0.25, d_m \approx1$, then $\hat{g}_{\delta}''(d_m\delta_m)$ is $\hat{g}_{\delta}''(d_m\delta_m)$-strongly convex in $N(x^\star; d_m\delta_m)$. Since $\hat{g}_{\delta}''(d_m\delta_m)$ is monotonically decreasing with respect to $\delta_m$, its minimum value is $\hat{g}_{\delta}''(d_m\delta_m)=\hat{g}_{0.25}''(0.25)=2$. Therefore, for all $i \in [D]$, $g(x_i)$ is $2$-strongly convex in $N(x^\star; d_m\delta_m)$. Since the sum of strongly convex functions is a strongly convex function, $f(x)$, which sums $g(x_i)$ from $i=1$ to $i=D$, is also $2$-strongly convex in $N(x^\star; d_m\delta_m)$. Therefore, the second condition of new $\sigma$-niceness holds and Rastrigin's function is a new $2$-nice function.
\end{proof}

\section{B. Detail of the test functions for explicit graduated optimization}
Let $\bm{x} := (x_1, \cdots, x_D)^\top \in \mathbb{R}^D$. 

\begin{itemize}
\item Ackley's function \cite{Ackley1987Aco, Thomas1993AnO} is defined as follows:
\begin{align*}
f(\bm{x}) := -20\exp\left( -0.2\sqrt{\frac{1}{D} \sum_{i=1}^{D} x_i^2} \right) - \exp\left( \frac{1}{D} \sum_{i=1}^{D} \cos(2\pi x_i)\right) + e + 20.
\end{align*}

\item Alpine1 function \cite{Rahnamayan2007Ano} is defines as follows:
\begin{align*}
f(\bm{x}) := \sum_{i=1}^{D} |x_i \sin(x_i) + 0.1 x_i|.
\end{align*}

\item Drop-Wave function \cite{Molga2005Tes} is defined as follows:
\begin{align*}
f(\bm{x}) := 1 - \frac{1+\cos\left( 12\sqrt{\sum_{i=1}^{D} x_i^2}\right)}{0.5 \sum_{i=1}^{D} x_i^2 + 2}.
\end{align*}

\item Ellipsoid function (weighted sphere function or hyper-ellipsoid function) \cite{Yang2010Eng} is defined as follows:
\begin{align*}
f(\bm{x}) := \sum_{i=1}^{D} i x_i^2.
\end{align*}

\item Griewank function \cite{Griewank1981Gen} is defined as follows:
\begin{align*}
f(\bm{x}) := \frac{1}{4000}\sum_{i=1}^{D}x_i^2 - \prod_{i=1}^{D} \cos\left( \frac{x_i}{\sqrt{i}}\right) + 1.
\end{align*}

\item HappyCat function \cite{Beyer2012Hap} is defined as follows:
\begin{align*}
f(\bm{x}) := \left| \sum_{i=1}^{D} x_i^2 - D \right|^{0.25} + \frac{0.5 \sum_{i=1}^{D} x_i^2 + \sum_{i=1}^{D} x_i}{D} + 0.5.
\end{align*}

\item HGBat function \cite{Ying2016GPU} is defined as follows:
\begin{align*}
f(\bm{x}) := \left| \left( \sum_{i=1}^{D} x_i^2 \right)^2 - \left( \sum_{i=1}^{D} x_i \right)^2 \right|^{0.5} + \frac{0.5 \sum_{i=1}^{D} x_i^2 + \sum_{i=1}^{D} x_i}{D} + 0.5.
\end{align*}

\item Modified Ridge function \cite{Plevris2022ACo} is defined as follows:
\begin{align*}
f(\bm{x}) := |x_1| + 2\left( \sum_{i=2}^{D} x_i^2 \right)^{0.1}.
\end{align*}

\item Rastrigin's function \cite{Torn1989Glo, Rudolph1990Glo} is defined as follows:
\begin{align*}
f(\bm{x}) := \sum_{i=1}^{D} \left( x_i^2 - 10\cos(2\pi x_i) \right) + 10D.
\end{align*}

\item Rosenbrock's function \cite{Rosenbrock1960AnA, Dixon1994Eff} is defined as follows:
\begin{align*}
f(\bm{x}) := \sum_{i=1}^{D-1} \left( 100\left( x_{i+1} - x_i^2 \right)^2 + (x_i-1)^2 \right).
\end{align*}

\item Rotated Hyper-ellipsoid function \cite{Molga2005Tes} is defined as follows:
\begin{align*}
f(\bm{x}) := \sum_{i=1}^{D}(D+1-i)x_i^2.
\end{align*}

\item Salomon function \cite{Salomon1996Re-} is defined as follows:
\begin{align*}
f(\bm{x}) := 1-\cos \left( 2\pi \sqrt{\sum_{i=1}^{D} x_i^2} \right) + 0.1 \sqrt{\sum_{i=1}^{D} x_i^2}.
\end{align*}

\item Schaffer's F7 function \cite{Schaffer1985Mul, Caruana1989Ast} is defined as follows:
\begin{align*}
f(\bm{x}) := \left( \frac{1}{D-1} \sum_{i=1}^{D-1} \left( \left( x_i^2 + x_{i+1}^2 \right)^{1/4} + \left( x_i^2 + x_{i+1}^2 \right)^{1/4} \sin^2 \left( 50\left( x_i^2 + x_{i+1}^2 \right)^{1/10} \right)\right) \right)^2.
\end{align*}

\item Schwefel function \cite{Schwefel1981Num} is defined as follows:
\begin{align*}
f(\bm{x}) := 418.9829D - \sum_{i=1}^{D} x_i \sin\left( \sqrt{\left| x_i \right|}\right).
\end{align*}

\item Schwefel 2.21 function \cite{Schwefel1981Num} is defined as follows:
\begin{align*}
f(\bm{x}) := \max_{i=1,\cdots,D} |x_i|.
\end{align*}

\item Sphere function \cite{Schumer1968Ada} is defined as follows:
\begin{align*}
f(\bm{x}) := \sum_{i=1}^{D} x_i^2.
\end{align*}
\end{itemize}

\begin{table*}[htbp]
    \centering
    \begin{tabular}{c|cccc}
        \hline
        function & search range & optimal solution &Algorithm \ref{alg:sgd}'s learning rate\\
        \hline
        Ackley's & $[-32.768, 32.768]^D$ & $\bm{x}^\star = (0, 0, \cdots, 0)$ & $5\delta_m$ \\
        Alpine1 & $[-10, 10]^D$ & $\bm{x}^\star = (0, 0, \cdots, 0)$ & $\delta_m$ \\
        Drop-Wave & $[-5.12, 5.12]^D$ & $\bm{x}^\star = (0, 0, \cdots, 0)$ & $0.1\delta_m$ \\
        Ellipsoid & $[-100, 100]^D$ & $\bm{x}^\star = (0, 0, \cdots, 0)$ & $0.01\delta_m$  \\
        Griewank & $[-100, 100]^D$ & $\bm{x}^\star = (0, 0, \cdots, 0)$ & $(50\delta_m)^{\delta_m}$  \\
        HappyCat & $[-20, 20]^D$ & $\bm{x}^\star = (-1, -1, \cdots, -1)$ & $(10\delta_m)^{\delta_m}$  \\
        HGBat & $[-15, 15]^D$ & $\bm{x}^\star = (-1, -1, \cdots, -1)$ & $0.1\delta_m$  \\
        Modified Ridge & $[-100, 100]^D$ & $\bm{x}^\star = (0, 0, \cdots, 0)$ & $\delta_m$ \\
        Rastrigin's & $[-5.12, 5.12]^D$ & $\bm{x}^\star = (0, 0, \cdots, 0)$ & $0.01\delta_m$ \\
        Rosenbrock's & $[-10, 10]^D$ & $\bm{x}^\star = (0, 0, \cdots, 0)$ & $0.00005\delta_m$ \\
        Rotated Hyper-ellipsoid & $[-100, 100]^D$ & $\bm{x}^\star = (0, 0, \cdots, 0)$ & $0.01\delta_m$ \\
        Salomon & $[-20, 20]^D$ & $\bm{x}^\star = (0, 0, \cdots, 0)$ & $(10\delta_m)^{\delta_m}$ \\
        Schaffer's F7 & $[-100, 100]^D$ & $\bm{x}^\star = (0, 0, \cdots, 0)$ & $20\delta_m$ \\
        Schwefel & $[-500, 500]^D$ & $\bm{x}^\star = (0, 0, \cdots, 0)$ & $10\delta_m$ \\
        Schwefel 2.21 & $[-100, 100]^D$ & $\bm{x}^\star = (0, 0, \cdots, 0)$ & $(10\delta_m)^{\delta_m}$ \\
        Sphere & $[-100, 100]^D$ & $\bm{x}^\star = (0, 0, \cdots, 0)$ & $\delta_m$ \\
        \hline
    \end{tabular}
    \caption{Test function's search range, optimal solution, and used learning rate in Algorithm \ref{alg:sgd}.}
    \label{tab:11}
\end{table*}

\section{C. Convergence analysis of Algorithm \ref{alg:gnc3}}
This section provides a proof of Theorem \ref{thm:02}. The new $\sigma$-nice function ensures that each of the smoothed $M$ functions is $\sigma$-strongly convex in a neighborhood $N(\bm{x}^\star ; d_m |\delta_m|) \ (m \in [M])$ (see Definition \ref{dfn:3.1}). Note that Algorithm \ref{alg:sgd} can therefore be applied to $\sigma$-strongly convex functions, although it does not require the original function $f$ to be $\sigma$-strongly convex. First, we prove the following theorem on the basis of a convergence analysis of gradient descent (Algorithm \ref{alg:sgd}) for $\sigma$-strongly convex functions.

\subsection{Theorem and lemmas for the analyses}
\begin{thm}
[Convergence analysis of Algorithm \ref{alg:sgd}]\citep[Theorem 3]{Sato2023Usi}\label{thm:d.1} Suppose that Assumption (A2) holds, where $\mathsf{G}_{\xi_t}$ is the stochastic gradient of a $\sigma$-strongly convex and $L_g$-smooth function $F \colon \mathbb{R}^d \to \mathbb{R}$, and $\eta < \min \left\{\frac{1}{\sigma}, \frac{2}{L_g} \right\}$. Then, the sequence $(\hat{\bm{x}}_t)_{t \in \mathbb{N}}$ generated by Algorithm \ref{alg:sgd} satisfies

\begin{align*}
\min_{t \in [T]} \mathbb{E} \left[ F \left(\hat{\bm{x}}_t \right) - F(\bm{x}^\star) \right] 
\leq \frac{H_4}{T} + H_3 \eta
= \mathcal{O}\left( \frac{1}{T} + \eta \right),
\end{align*}
where $\bm{x}^\star$ is the global minimizer of $F$, and $H_4 \text{ and } H_3 > 0$ are nonnegative constants.
\end{thm}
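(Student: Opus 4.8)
The plan is to run the standard one-step analysis of constant-stepsize SGD (Algorithm \ref{alg:sgd}) on the globally $\sigma$-strongly convex, $L_g$-smooth function $F$, and then unroll the resulting recursion. Write $\hat{\bm{x}}_{t+1} = \hat{\bm{x}}_t - \eta\, \mathsf{G}_{\xi_t}(\hat{\bm{x}}_t)$ and $\bm{x}^\star = \operatorname{argmin}_{\bm{x}} F(\bm{x})$. First I would apply the descent lemma (from $L_g$-smoothness) along the step, $F(\hat{\bm{x}}_{t+1}) \le F(\hat{\bm{x}}_t) - \eta\langle \nabla F(\hat{\bm{x}}_t), \mathsf{G}_{\xi_t}(\hat{\bm{x}}_t)\rangle + \tfrac{L_g\eta^2}{2}\| \mathsf{G}_{\xi_t}(\hat{\bm{x}}_t)\|^2$, take the conditional expectation $\mathbb{E}_{\xi_t}[\cdot]$, use unbiasedness ($\mathbb{E}_{\xi_t}[\mathsf{G}_{\xi_t}(\hat{\bm{x}}_t)] = \nabla F(\hat{\bm{x}}_t)$) on the cross term, and bound $\mathbb{E}_{\xi_t}[\| \mathsf{G}_{\xi_t}(\hat{\bm{x}}_t)\|^2] = \| \nabla F(\hat{\bm{x}}_t)\|^2 + \mathbb{E}_{\xi_t}[\| \mathsf{G}_{\xi_t}(\hat{\bm{x}}_t) - \nabla F(\hat{\bm{x}}_t)\|^2] \le \| \nabla F(\hat{\bm{x}}_t)\|^2 + C^2$ using a second-moment/variance bound $C^2$ for the stochastic gradient of $F$. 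This gives
\begin{align*}
\mathbb{E}_{\xi_t}\left[ F(\hat{\bm{x}}_{t+1})\right] \le F(\hat{\bm{x}}_t) - \eta\left( 1 - \tfrac{L_g\eta}{2}\right)\| \nabla F(\hat{\bm{x}}_t)\|^2 + \tfrac{L_g\eta^2 C^2}{2},
\end{align*}
and the restriction $\eta < 2/L_g$ is exactly what makes $1 - \tfrac{L_g\eta}{2}$ positive.

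Next I would invoke strong convexity through the inequality $\| \nabla F(\bm{x})\|^2 \ge 2\sigma\left( F(\bm{x}) - F(\bm{x}^\star)\right)$ (a consequence of $\sigma$-strong convexity with minimizer $\bm{x}^\star$). The cleanest route to the stated $\min_t$ bound rearranges the display as $\eta\left(1-\tfrac{L_g\eta}{2}\right)\mathbb{E}[\| \nabla F(\hat{\bm{x}}_t)\|^2] \le \mathbb{E}[F(\hat{\bm{x}}_t)] - \mathbb{E}[F(\hat{\bm{x}}_{t+1})] + \tfrac{L_g\eta^2 C^2}{2}$, sums over $t \in [T]$ so the function values telescope (using $\mathbb{E}[F(\hat{\bm{x}}_{T+1})] \ge F(\bm{x}^\star)$), divides by $\eta\left(1-\tfrac{L_g\eta}{2}\right)T$, and applies the gradient-norm inequality termwise to convert the averaged squared gradient into an upper bound on $\min_{t\in[T]}\mathbb{E}[F(\hat{\bm{x}}_t) - F(\bm{x}^\star)]$. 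Equivalently, substituting the gradient-norm inequality into the display gives a geometric contraction $\mathbb{E}[F(\hat{\bm{x}}_{t+1}) - F(\bm{x}^\star)] \le (1-\rho)\,\mathbb{E}[F(\hat{\bm{x}}_t) - F(\bm{x}^\star)] + \tfrac{L_g\eta^2 C^2}{2}$ with $\rho := 2\eta\sigma\left(1-\tfrac{L_g\eta}{2}\right) \in (0,1]$ (here $\eta < 1/\sigma$ and $\sigma \le L_g$ keep $\rho$ in that range), whose unrolling has transient term $\mathcal{O}(1/T)$ and noise floor $\tfrac{L_g\eta^2 C^2}{2\rho} = \mathcal{O}(\eta)$. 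Either way the bound $\tfrac{H_4}{T} + H_3\eta$ pops out, with $H_4 \propto \left( F(\hat{\bm{x}}_1) - F(\bm{x}^\star)\right) / \left( \sigma\eta\left(1-\tfrac{L_g\eta}{2}\right)\right)$ and $H_3 \propto L_g C^2 / \left( \sigma\left(1-\tfrac{L_g\eta}{2}\right)\right)$, both nonnegative.

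The step needing the most care is coordinating the two stepsize conditions --- $\eta < 2/L_g$ to make the per-step progress coefficient $1 - \tfrac{L_g\eta}{2}$ positive, and $\eta < 1/\sigma$ to keep the strong-convexity contraction factor in $(0,1]$ --- while faithfully tracking $H_3$ and $H_4$ through these manipulations; the telescoping/geometric unrolling and the passage from the average to $\min_t$ are routine. Since the theorem assumes $F$ is globally $\sigma$-strongly convex, no local-to-global argument is needed here; confining the iterates to the region $N(\bm{x}^\star; d_m|\delta_m|)$ on which $\hat{f}_{\delta_m}$ is strongly convex is a separate matter handled in the surrounding graduated-optimization analysis.
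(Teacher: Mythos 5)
Your plan is sound, but note first that this paper does not actually prove the statement: Theorem~3 here is imported verbatim from the prior work \citep[Theorem 3]{Sato2023Usi} and is used as a black box (to define $T_F = H_4/(\epsilon_m - H_3\eta_m)$), so there is no in-paper proof to compare against. Judged on its own, your derivation is the standard one and goes through: descent lemma plus unbiasedness and a variance bound give the one-step inequality, the strong-convexity (PL-type) bound $\|\nabla F(\bm{x})\|^2 \ge 2\sigma\left(F(\bm{x}) - F(\bm{x}^\star)\right)$ converts it into either a telescoped average-gradient bound or a geometric contraction with noise floor $\mathcal{O}(\eta)$, and $\min_{t\in[T]}\mathbb{E}[F(\hat{\bm{x}}_t)-F(\bm{x}^\star)]$ is bounded by the average, yielding $H_4/T + H_3\eta$ with nonnegative constants of the form you state. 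Two small points of bookkeeping: (i) the variance bound $C^2$ you invoke is not part of (A2) (which is only $L_f$-Lipschitzness of $f$) but corresponds to Assumption (A3)(ii) in this paper, so strictly you are using more than the hypotheses as literally quoted --- this looseness is inherited from how the theorem is cited, not a flaw in your argument; (ii) in your PL-based route the condition $\eta < 1/\sigma$ is not actually needed, since $\rho = 2\eta\sigma\left(1 - \tfrac{L_g\eta}{2}\right) \le \sigma/L_g \le 1$ automatically for $\eta < 2/L_g$; that condition is natural in the alternative distance-to-optimum analysis ($\mathbb{E}\|\hat{\bm{x}}_{t+1}-\bm{x}^\star\|^2 \le (1-\eta\sigma)\|\hat{\bm{x}}_t-\bm{x}^\star\|^2 - 2\eta\left(F(\hat{\bm{x}}_t)-F(\bm{x}^\star)\right) + \eta^2\mathbb{E}\|\mathsf{G}_{\xi_t}(\hat{\bm{x}}_t)\|^2$), which is presumably closer to the original proof in the cited work and produces the same $\mathcal{O}(1/T + \eta)$ bound after telescoping. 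Either route is acceptable; neither introduces a gap.
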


Theorem \ref{thm:d.1} shows that Algorithm \ref{alg:sgd} can reach an $\epsilon_m$-neighborhood of optimal solution $\bm{x}_{{\delta}_m}^\star$ of $\hat{f}_{\delta_m}$ in approximately $T_F := \frac{H_4}{\epsilon_m - H_3 \eta}$ iterations.

\begin{lem}\label{lem:d.1}
Suppose that (C2) holds; then $\hat{f}_\delta$ is an $L_f$-Lipschitz function; i.e., for all $\bm{x}, \bm{y} \in \mathbb{R}^d$,
\begin{align*}
\left|\hat{f}_\delta(\bm{x}) - \hat{f}_\delta(\bm{y}) \right| 
\leq L_f \| \bm{x} - \bm{y} \|.
\end{align*}
\begin{proof}
From Definition \ref{dfn:fhat} and (C2), we obtain, for all $\bm{x}, \bm{y} \in \mathbb{R}^d$,
\begin{align*}
\left|\hat{f}_\delta(\bm{x}) - \hat{f}_\delta(\bm{y}) \right|
&=\left| \mathbb{E}_{\bm{u}}\left[f(\bm{x}-\delta \bm{u})\right] - \mathbb{E}_{\bm{u}}\left[f(\bm{y}-\delta \bm{u})\right] \right| \\
&=\left| \mathbb{E}_{\bm{u}} \left[f(\bm{x} - \delta \bm{u}) - f(\bm{y} - \delta \bm{u})\right] \right| \\
&\leq \mathbb{E}_{\bm{u}} \left[\left| f(\bm{x} - \delta \bm{u}) - f(\bm{y} - \delta \bm{u}) \right|\right] \\
&\leq \mathbb{E}_{\bm{u}} \left[L_f \| (\bm{x} - \delta \bm{u}) - (\bm{y} - \delta \bm{u}) \|\right] \\
&= \mathbb{E}_{\bm{u}} \left[L_f \left\| \bm{x} - \bm{y} \right\| \right] \\
&= L_f \| \bm{x} - \bm{y} \|.
\end{align*}
This completes the proof.
\end{proof}
\end{lem}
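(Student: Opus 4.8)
The plan is to reduce the Lipschitz bound on the smoothed function directly to the Lipschitz bound on $f$ itself, using nothing more than the definition of $\hat{f}_\delta$ in Definition \ref{dfn:fhat} together with elementary properties of the expectation. First I would expand both $\hat{f}_\delta(\bm{x})$ and $\hat{f}_\delta(\bm{y})$ via the defining equation (\ref{eq:06}) as expectations over $\bm{u} \sim B(\bm{0};1)$, and then merge the two expectations into a single expectation of the difference $f(\bm{x} - \delta\bm{u}) - f(\bm{y} - \delta\bm{u})$ by linearity of $\mathbb{E}_{\bm{u}}$.

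Next I would apply the triangle inequality for expectations, i.e. Jensen's inequality for the convex map $|\cdot|$, to move the absolute value inside the expectation. This gives the bound $|\hat{f}_\delta(\bm{x}) - \hat{f}_\delta(\bm{y})| \leq \mathbb{E}_{\bm{u}}[|f(\bm{x} - \delta\bm{u}) - f(\bm{y} - \delta\bm{u})|]$, reducing the problem to controlling the integrand pointwise in $\bm{u}$.

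The crucial step is then to invoke the $L_f$-Lipschitz property of $f$ (assumption (A2), cited as (C2) in the statement) on each realization of $\bm{u}$. The two arguments $\bm{x} - \delta\bm{u}$ and $\bm{y} - \delta\bm{u}$ differ by exactly $\bm{x} - \bm{y}$, since the common smoothing shift $-\delta\bm{u}$ cancels in the subtraction; hence each integrand is at most $L_f \|(\bm{x} - \delta\bm{u}) - (\bm{y} - \delta\bm{u})\| = L_f \|\bm{x} - \bm{y}\|$. This upper bound is a constant independent of $\bm{u}$, so pulling it out of the expectation immediately yields $|\hat{f}_\delta(\bm{x}) - \hat{f}_\delta(\bm{y})| \leq L_f \|\bm{x} - \bm{y}\|$, which is the claim.

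There is no genuine obstacle here: the argument is a short chain of two equalities followed by two standard inequalities (Jensen and Lipschitz). The only point deserving care is the cancellation of the smoothing shift $-\delta\bm{u}$ inside the Lipschitz estimate, which is precisely what makes the resulting constant $L_f$ independent both of the smoothing level $\delta$ and of the distribution of $\bm{u}$ over $B(\bm{0};1)$.
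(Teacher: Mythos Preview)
Your proposal is correct and follows essentially the same approach as the paper's proof: expand $\hat{f}_\delta$ via Definition~\ref{dfn:fhat}, combine the expectations by linearity, apply Jensen's inequality to move the absolute value inside, invoke the $L_f$-Lipschitz property of $f$ pointwise in $\bm{u}$, and observe that the shift $-\delta\bm{u}$ cancels so the bound $L_f\|\bm{x}-\bm{y}\|$ is constant and exits the expectation. The paper's argument is line-for-line the same chain of steps.
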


Lemma \ref{lem:d.1} implies that the Lipschitz constant $L_f$ of the original function $f$ is carried over to the function $\hat{f}_{\delta}$ smoothed by any $\delta \in \mathbb{R}$.

\begin{lem}\label{lem:d.2}
Let $\hat{f}_\delta$ be the smoothed version of $f$; then, for all $\bm{x} \in \mathbb{R}^d$,
\begin{align*}
\left|\hat{f}_\delta (\bm{x}) - f(\bm{x})\right|
\leq |\delta| L_f.
\end{align*}
\begin{proof}
From Definition \ref{dfn:fhat} and (C2), we have, for all $\bm{x}, \bm{y} \in \mathbb{R}^d$,
\begin{align*}
\left|\hat{f}_\delta(\bm{x}) - f(\bm{x}) \right|
&=\left| \mathbb{E}_{\bm{u}}\left[f(\bm{x}-\delta \bm{u})\right] - f(\bm{x}) \right| \\
&=\left| \mathbb{E}_{\bm{u}} \left[f(\bm{x} - \delta \bm{u}) - f(\bm{x}) \right] \right| \\
&\leq \mathbb{E}_{\bm{u}} \left[\left| f(\bm{x} - \delta \bm{u}) - f(\bm{x}) \right|\right] \\
&\leq \mathbb{E}_{\bm{u}} \left[L_f \| (\bm{x} - \delta \bm{u}) - \bm{x} \|\right] \\
&= \mathbb{E}_{\bm{u}} \left[L_f |\delta| \| \bm{u} \| \right] \\
&= |\delta| L_f,
\end{align*}
where $\| \bm{u} \| \leq 1$. This completes the proof.
\end{proof}
\end{lem}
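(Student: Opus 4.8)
The plan is to bound the smoothing error by writing it as a single expectation over the smoothing perturbation and then applying the Lipschitz assumption pointwise inside that expectation. First I would invoke Definition \ref{dfn:fhat} to write $\hat{f}_\delta(\bm{x}) = \mathbb{E}_{\bm{u} \sim B(\bm{0};1)}[f(\bm{x}-\delta\bm{u})]$, and observe that $f(\bm{x})$ does not depend on $\bm{u}$, so $f(\bm{x}) = \mathbb{E}_{\bm{u}}[f(\bm{x})]$. Combining the two expectations gives $\hat{f}_\delta(\bm{x}) - f(\bm{x}) = \mathbb{E}_{\bm{u}}[f(\bm{x}-\delta\bm{u}) - f(\bm{x})]$, which reduces the claim to estimating a pointwise increment of $f$ under an averaged perturbation.

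Next I would move the absolute value inside the expectation using $|\mathbb{E}_{\bm{u}}[X]| \le \mathbb{E}_{\bm{u}}[|X|]$ (Jensen's inequality for the convex map $|\cdot|$, equivalently the triangle inequality for integrals). Then the $L_f$-Lipschitz assumption (A2), applied to the pair $\bm{x}-\delta\bm{u}$ and $\bm{x}$, yields $|f(\bm{x}-\delta\bm{u}) - f(\bm{x})| \le L_f\|(\bm{x}-\delta\bm{u}) - \bm{x}\| = L_f|\delta|\,\|\bm{u}\|$. The key simplification is that the shift $\bm{x}$ is common to both arguments and cancels, so the perturbation contributes exactly $\|\delta\bm{u}\| = |\delta|\,\|\bm{u}\|$ and nothing larger.

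Finally I would use that $\bm{u}$ is supported on the closed unit ball $B(\bm{0};1)$, so $\|\bm{u}\| \le 1$ almost surely, giving $\mathbb{E}_{\bm{u}}[L_f|\delta|\,\|\bm{u}\|] \le L_f|\delta|\,\mathbb{E}_{\bm{u}}[\|\bm{u}\|] \le |\delta| L_f$, which is the desired inequality. There is no genuine obstacle: the proof is a short chain consisting of linearity of expectation, the triangle inequality, the Lipschitz bound from (A2), and the boundedness of the support of $\bm{u}$. The only point deserving care is ensuring the cancellation of the common term $\bm{x}$ inside $f$, so that the estimate depends on $\delta$ only through $|\delta|\,\|\bm{u}\|$ rather than through the magnitude of $\bm{x}$ itself.
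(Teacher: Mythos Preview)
Your proposal is correct and follows essentially the same approach as the paper's own proof: rewrite the difference as a single expectation, pull the absolute value inside via the triangle inequality, apply the $L_f$-Lipschitz bound pointwise, and conclude using $\|\bm{u}\|\le 1$ on the support $B(\bm{0};1)$. The only cosmetic difference is that the paper cites the Lipschitz condition as (C2) rather than (A2).
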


\subsection{Proof of Theorem \ref{thm:02}}
The following proof framework is based on previous studies \cite{Sato2023Usi, Elad2016OnG}.
\begin{proof}
From the definition of $\delta_{m+1}^{\text{\rm{SHB}}}$ (see Algorithm \ref{alg:gnc3}), we obtain
\begin{align*}
\delta_{m+1}^{\text{\rm{SHB}}} &:= \eta_{m+1} \sqrt{(1 + \hat{\beta}_{m+1})\frac{C_{\text{\rm{SHB}}}^2}{b_{m+1}} + \hat{\beta}_{m+1} K_{\text{\rm{SHB}}}^2} \\
&= \kappa_m \eta_m \sqrt{(1 + \rho_m \hat{\beta}_m) \frac{C_{\text{\rm{SHB}}}^2}{\lambda_m b_m} + \rho_m \hat{\beta}_m K_{\text{\rm{SHB}}}^2} \\
&= \gamma_m \delta_m^{\text{\rm{SHB}}}.
\end{align*}
Hence, from $M^p := \frac{1}{\alpha_0 \epsilon}$ and $\gamma_m := \frac{(M-m)^p}{\left\{M-(m-1) \right\}^p}$, 
\begin{align*}
\delta_M^{\text{\rm{SHB}}} &= \delta_1^{\text{\rm{SHB}}} (\gamma_1 \gamma_2 \cdot \gamma_{M-1}) \\
&=\delta_1^{\text{\rm{SHB}}} \cdot \frac{(M-1)^p}{M^p} \cdot \frac{(M-2)^p}{(M-1)^p} \cdot \frac{(M-3)^p}{(M-2)^p} \cdots \frac{1}{2^p} \\
&=\delta_1^{\text{\rm{SHB}}} \cdot \frac{1}{M^p} \\
&=\delta_1^{\text{\rm{SHB}}} \alpha_0 \epsilon.
\end{align*}
According to Theorem \ref{thm:d.1},
\begin{align*}
\mathbb{E}\left[ \hat{f}_{\delta_M^{\text{\rm{SHB}}}} (\bm{x}_{M+1}) - \hat{f}_{\delta_M^{\text{\rm{SHB}}}} (\bm{x}_{\delta_M^{\text{\rm{SHB}}}}^\star)\right] 
\leq \epsilon_M 
:=\sigma^2 {\delta_M^{\text{\rm{SHB}}}}^2
=\left(\sigma \delta_1^{\text{\rm{SHB}}} \alpha_0 \epsilon \right)^2.
\end{align*}
From Lemmas \ref{lem:d.1} and \ref{lem:d.2},
\begin{align*}
f(\bm{x}_{M+2}) - f(\bm{x}^\star)
&=\left\{f(\bm{x}_{M+2}) - \hat{f}_{\delta_M^{\text{\rm{SHB}}}}(\bm{x}_{M+2})\right\} + \left\{\hat{f}_{\delta_M^{\text{\rm{SHB}}}}(\bm{x}^\star) - f(\bm{x}^\star)\right\} + \left\{\hat{f}_{\delta_M^{\text{\rm{SHB}}}}(\bm{x}_{M+2}) - \hat{f}_{\delta_M^{\text{\rm{SHB}}}}(\bm{x}^\star) \right\} \\
&\leq \left\{f(\bm{x}_{M+2}) - \hat{f}_{\delta_M^{\text{\rm{SHB}}}}(\bm{x}_{M+2})\right\} + \left\{\hat{f}_{\delta_M^{\text{\rm{SHB}}}}(\bm{x}^\star) - f(\bm{x}^\star)\right\} + \left\{\hat{f}_{\delta_M^{\text{\rm{SHB}}}}(\bm{x}_{M+2}) - \hat{f}_{\delta_M^{\text{\rm{SHB}}}}(\bm{x}_{\delta_M^{\text{\rm{SHB}}}}^\star)\right\} \\
&\leq \left| \delta_M^{\text{\rm{SHB}}} \right| L_f + \left| \delta_M^{\text{\rm{SHB}}} \right| L_f + \left\{\hat{f}_{\delta_M^{\text{\rm{SHB}}}}(\bm{x}_{M+2}) - \hat{f}_{\delta_M^{\text{\rm{SHB}}}}(\bm{x}_{\delta_M^{\text{\rm{SHB}}}}^\star)\right\} \\
&= 2 \left| \delta_M^{\text{\rm{SHB}}} \right| L_f + \left\{ \hat{f}_{\delta_M^{\text{\rm{SHB}}}} (\bm{x}_{M+2}) - \hat{f}_{\delta_M^{\text{\rm{SHB}}}} (\bm{x}_{M+1})\right\} + \left\{\hat{f}_{\delta_M^{\text{\rm{SHB}}}} (\bm{x}_{M+1}) - \hat{f}_{\delta_M^{\text{\rm{SHB}}}}(\bm{x}_{\delta_M^{\text{\rm{SHB}}}}^\star) \right\} \\
&\leq 2 \left| \delta_M^{\text{\rm{SHB}}} \right| L_f + L_f \left\| \bm{x}_{M+2} - \bm{x}_{M+1} \right\| + \left\{\hat{f}_{\delta_M^{\text{\rm{SHB}}}} (\bm{x}_{M+1}) - \hat{f}_{\delta_M^{\text{\rm{SHB}}}}(\bm{x}_{\delta_M^{\text{\rm{SHB}}}}^\star) \right\}. 
\end{align*}
Then, we have
\begin{align*}
\mathbb{E} \left[ f(\bm{x}_{M+2}) - f(\bm{x}^\star) \right]
&\leq 2 \left| \delta_M^{\text{\rm{SHB}}} \right| L_f + 2L_f d_M \left| \delta_M^{\text{\rm{SHB}}} \right| + \epsilon_M \\
&\leq 2 \left| \delta_M^{\text{\rm{SHB}}} \right| L_f + 2L_f \bar{d} \left| \delta_M^{\text{\rm{SHB}}} \right| + \epsilon_M \\
&=2L_f \left| \delta_M^{\text{\rm{SHB}}} \right| \left(1+ \bar{d}\right) + \epsilon_M,
\end{align*}
where $\left\| \bm{x}_{M+2} - \bm{x}_{M+1} \right\| \leq 2 d_M \left| \delta_M^{\text{\rm{SHB}}} \right|$, since $\bm{x}_{M+2}, \bm{x}_{M+1} \in N(\bm{x}^\star; d_M \left|\delta_M^{\text{\rm{SHB}}} \right|)$, and $d_M \leq \bar{d} < +\infty$. Therefore, 
\begin{align*}
\mathbb{E} \left[ f(\bm{x}_{M+2}) - f(\bm{x}^\star) \right]
&\leq 2L_f\left(1+\bar{d}\right)\left| \delta_1^{\text{\rm{SHB}}} \right| \alpha_0 \epsilon + \left(\sigma \left| \delta_1^{\text{\rm{SHB}}} \right| \alpha_0 \epsilon \right)^2 \\
&\leq \epsilon,
\end{align*}
where $\alpha_0 := \min \left\{ \frac{1}{4L_f \left| \delta_1^{\text{\rm{SHB}}} \right| \left(1+\bar{d}\right)}, \frac{1}{\sqrt{2}\sigma \left| \delta_1^{\text{\rm{SHB}}}\right|} \right\}$.\\
Let $T_{\text{total}}$ be the total number of queries made by Algorithm \ref{alg:gnc3}; then, 
\begin{align*}
T_{\text{total}} 
&= \sum_{m=1}^{M+1} \frac{H_4}{\epsilon_m - H_3 \eta_m} 
= \sum_{m=1}^{M+1} \frac{H_4}{\sigma^2{\delta_m^{\text{\rm{SHB}}}}^2 - H_3\eta_m} \\
&= \frac{H_4}{\sigma^2{\delta_1^{\text{\rm{SHB}}}}^2 - H_3\eta_1} + \frac{H_4}{\sigma^2\delta_2^2 - H_3\eta_2} + \cdots + \frac{2H_4}{\sigma^2{\delta_M^{\text{\rm{SHB}}}}^2 - H_3\eta_{M}}\\
&\leq \frac{H_4(M+1)}{\sigma^2 {\delta_M^{\text{\rm{SHB}}}}^2 - H_3\eta_M} \\
&\leq \frac{H_4(M+1)}{\sigma^2 {\delta_M^{\text{\rm{SHB}}}}^2 - H_3\eta_1} \\
&= \frac{H_4(M+1)}{\sigma^2 {\delta_1^{\text{\rm{SHB}}}}^2 \frac{1}{M^{2p}} - H_3\eta_1} \\
&= \frac{H_4M^{2p}(M+1)}{\sigma^2 {\delta_1^{\text{\rm{SHB}}}}^2 - H_3\eta_1 M^{2p}} \\
&= \frac{H_4M^{2p+1}}{\sigma^2 {\delta_1^{\text{\rm{SHB}}}}^2 - H_3\eta_1 M^{2p}} + \frac{H_4M^{2p}}{\sigma^2 {\delta_1^{\text{\rm{SHB}}}}^2 - H_3\eta_1 M^{2p}}.
\end{align*}
From $M^p := \frac{1}{\alpha_0 \epsilon}$, 
\begin{align*}
T_{\text{total}} 
&= \frac{H_4 \left( \frac{1}{\alpha_0 \epsilon} \right)^{\frac{1}{p} + 2}}{\sigma^2 {\delta_1^{\text{\rm{SHB}}}}^2 - H_3\eta_1 \left( \frac{1}{\alpha_0 \epsilon} \right)^2} + \frac{H_4 \left( \frac{1}{\alpha_0 \epsilon} \right)^2}{\sigma^2 {\delta_1^{\text{\rm{SHB}}}}^2 - H_3\eta_1 \left( \frac{1}{\alpha_0 \epsilon} \right)^2} \\
&\leq \frac{2H_4 \left( \frac{1}{\alpha_0 \epsilon} \right)^{\frac{1}{p} + 2}}{\sigma^2 {\delta_1^{\text{\rm{SHB}}}}^2 - H_3\eta_1 \left( \frac{1}{\alpha_0 \epsilon} \right)^2} \\
&= \frac{2H_4 \left( \frac{1}{\alpha_0 \epsilon} \right)^{\frac{1}{p}}}{\sigma^2 {\delta_1^{\text{\rm{SHB}}}}^2 (\alpha_0 \epsilon)^2 - H_3\eta_1} \\
&= \frac{2H_4}{\sigma^2 {\delta_1^{\text{\rm{SHB}}}}^2 (\alpha_0 \epsilon)^{\frac{1}{p}+2} - H_3\eta_1 (\alpha_0 \epsilon)^{\frac{1}{p}}} \\
&\leq \frac{2H_4}{\sigma^2 {\delta_1^{\text{\rm{SHB}}}}^2 (\alpha_0 \epsilon)^{\frac{1}{p}} - H_3\eta_1 (\alpha_0 \epsilon)^{\frac{1}{p}}} \\
&= \mathcal{O}\left( \frac{1}{\epsilon^{\frac{1}{p}}}\right).
\end{align*}
This completes the proof.
\end{proof}

\section{D. Experiments on implicit graduated optimization with SHB}
\label{sec:f.2}
This section supplements the hyperparameter update rules in Figure \ref{fig:04} and provides the results for WideResNet-28-10 with Algorithm \ref{alg:gnc3} (Figure \ref{fig:08}). 
Recall that $\eta_{m+1} := \kappa_m \eta_m, b_{m+1} := \lambda_m b_m, \hat{\beta}_{m+1} := \rho_m \hat{\beta}_m$, and 
\begin{align*}
\frac{\kappa_m \eta_m \sqrt{(1 + \rho_m \hat{\beta}_m) \frac{C_{\text{\rm{SHB}}}^2}{\lambda_m b_m} + \rho_m \hat{\beta}_m K_{\text{\rm{SHB}}}^2}}{\eta_m \sqrt{(1 + \hat{\beta}_m) \frac{C_{\text{\rm{SHB}}}^2}{b_m} + \hat{\beta}_m K_{\text{\rm{SHB}}}^2}} = \gamma_m := \frac{(M-m)^{0.9}}{\left\{ M-(m-1)\right\}^{0.9}},
\end{align*}
where $M = 200, m \in [M], \kappa_m, \rho_m \in (0,1], \lambda_m \geq 1$, and $\delta_{m+1}^{\text{\rm{SHB}}} := \gamma_m \delta_{m}^{\text{\rm{SHB}}}$ is the degree of smoothing at epoch $m+1$.

The "only lr decayed" method reduces only the learning rate, so the decay rate of noise level $\delta_{m+1}^{\text{\rm{SHB}}} / \delta_m^{\text{\rm{SHB}}}$ is equal to $\gamma_m$. Hence, we should use $\kappa_m := \gamma_m, \lambda_m := 1$, and $\rho_m := 1$ ; i.e., the "only lr decayed" method means a polynomial decay rate of $p = 0.9$. The "only momentum decayed" method reduces only the momentum, so the decay rate of noise level $\delta_{m+1}^{\text{\rm{SHB}}} / \delta_m^{\text{\rm{SHB}}}$ is also equal to $\gamma_m$. Hence, we should use $\kappa_m := 1, \lambda_m := 1$, and $\rho_m$ such that
\begin{align*}
\frac{\sqrt{(1 + \rho_m \hat{\beta}_m) \frac{C_{\text{\rm{SHB}}}^2}{b} + \rho_m \hat{\beta}_m K_{\text{\rm{SHB}}^2}}}{\sqrt{(1 + \hat{\beta}_m) \frac{C_{\text{\rm{SHB}}}^2}{b} + \hat{\beta}_m K_{\text{\rm{SHB}}}^2}} = \gamma_m \ \text{ i.e., }\ 
\rho_m := \frac{\gamma_m^2\left\{ \frac{C_{\text{\rm{SHB}}}^2}{b} + \hat{\beta}_m \left( \frac{C_{\text{\rm{SHB}}}^2}{b} + K_{\text{\rm{SHB}}}^2 \right)\right\} - \frac{C_{\text{\rm{SHB}}}^2}{b}}{\hat{\beta}_m \left( \frac{C_{\text{\rm{SHB}}}^2}{b} + K_{\text{\rm{SHB}}}^2 \right)}.
\end{align*}
Note that, since $\hat{\beta}_{m+1} := \frac{\beta_{m+1}(\beta_{m+1}^2 - \beta_{m+1} + 1)}{(1-\beta_{m+1})^2}$, to find momentum $\beta_{m+1}$, we need to solve the cubic equation for $\beta_{m+1}$. The "lr decayed and batch size increased" method reduces the learning rate and increases the batch size, so the decay rate of the noise level $\delta_{m+1}^{\text{\rm{SHB}}} / \delta_m^{\text{\rm{SHB}}}$ is also equal to $\gamma_m$. Hence, we should use $\rho_m := 1$, $\lambda_m := \frac{(M-m)^{0.747}}{\left\{ M-(m-1)\right\}^{0.747}}$, and $\kappa_m$ such that
\begin{align*}
\kappa_m := \frac{\gamma_m \sqrt{(1 + \rho_m \hat{\beta}_m) \frac{C_{\text{\rm{SHB}}}^2}{\lambda_m b_m} + \rho_m \hat{\beta}_m K_{\text{\rm{SHB}}^2}}}{\sqrt{(1 + \hat{\beta}_m) \frac{C_{\text{\rm{SHB}}}^2}{b_m} + \hat{\beta}_m K_{\text{\rm{SHB}}}^2}},
\end{align*}
where $\lambda_m$ was tuned so that when the initial batch size was $256$, increasing the batch size was still computationally small enough to run on our machines when 200 epochs were reached. For the remaining methods, the parameters were similarly updated so that the decay rate of the degree of smoothing was $\gamma_m := \frac{(M-m)^{0.9}}{\left\{ M-(m-1)\right\}^{0.9}}$.

\begin{figure}
\begin{minipage}[t]{0.49\textwidth}
\centering
\includegraphics[width=1\textwidth]{2.acc-loss-shb-crop.pdf}%
\caption{Accuracy score for testing and loss function value for training versus epochs in training of \textbf{ResNet18} on CIFAR100 dataset with implicit graduated optimization with \textbf{SHB} (Algorithm \ref{alg:gnc3}). \textbf{This is the same graph shown in Figure \ref{fig:04}}.}
\label{fig:imp-shb}
\end{minipage}%
\hspace{0.01\textwidth}
\begin{minipage}[t]{0.49\textwidth}
\centering
\includegraphics[width=1\textwidth]{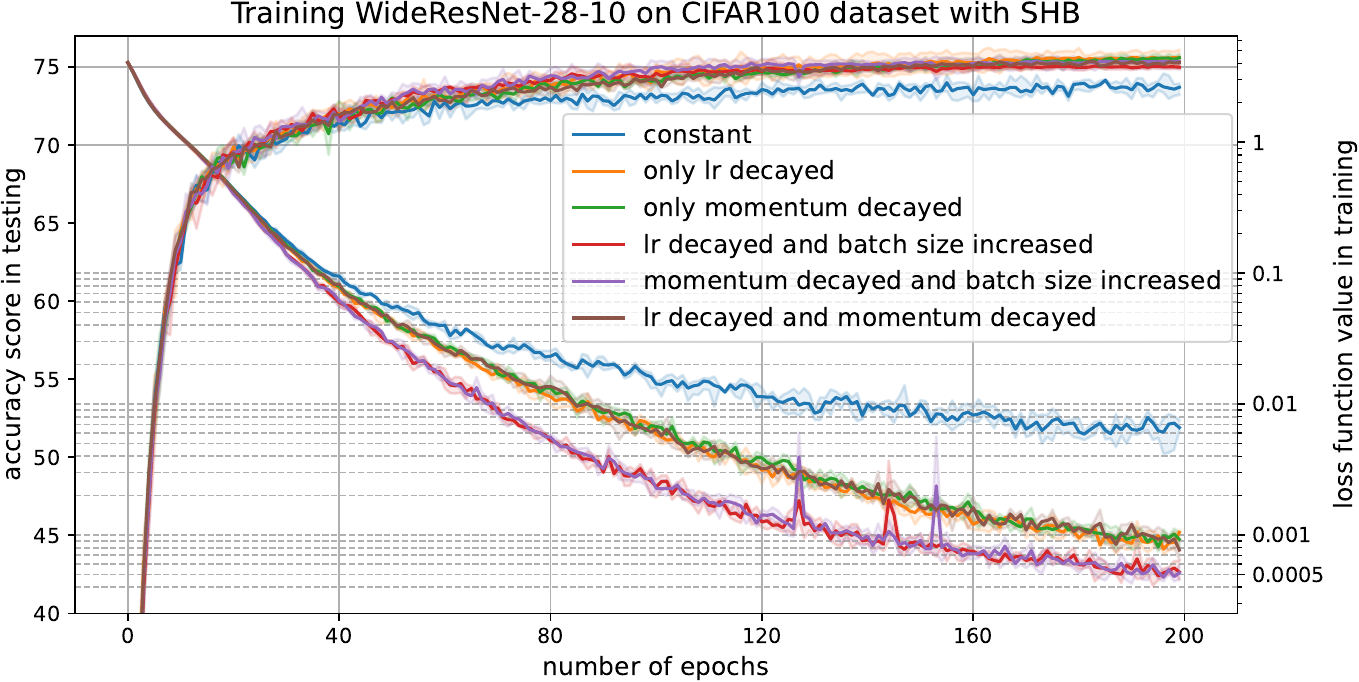}%%
\caption{Accuracy score for testing and loss function value for training versus epochs in training of \textbf{WideResNet-28-10} on CIFAR100 dataset with implicit graduated optimization with \textbf{SHB} (Algorithm \ref{alg:gnc3}).}
\label{fig:08}
\end{minipage}
\end{figure}

\begin{figure}[htbp]
\begin{minipage}[t]{0.99\hsize}
\centering
\includegraphics[width=1\textwidth]{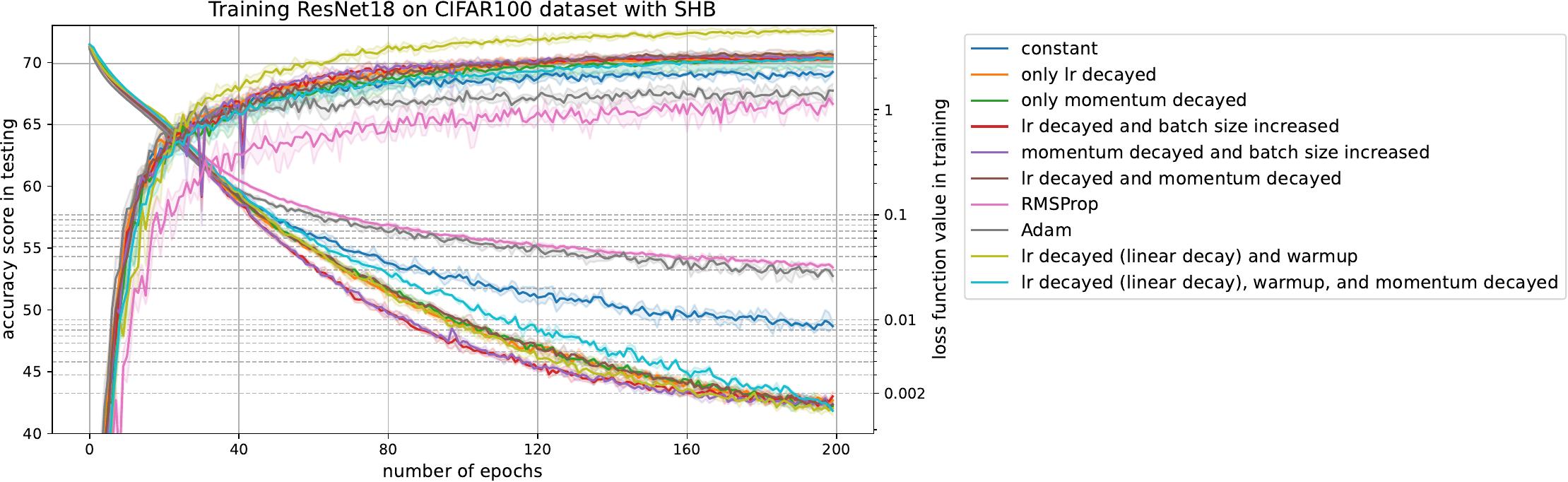}%
\caption{Complete version of Figures \ref{fig:04} and \ref{fig:imp-shb}.}
\label{fig:999}
\end{minipage}
\end{figure}

\section{E. Implicit graduated optimization with NSHB}
\label{sec:E}
This section provides the NSHB version of Algorithm \ref{alg:gnc4} and its convergence analysis.

\begin{algorithm}
\caption{Implicit Graduated Optimization with NSHB}
\label{alg:gnc4}
\begin{algorithmic}
\REQUIRE$\epsilon >0, p \in (0,1], \bar{d} > 0, \bm{x}_1 \in \mathbb{R}^d, \eta_1 > 0 , b_1 \in \mathbb{N}, \beta_1 \in [0,1), C_{\text{\rm{NSHB}}}^2 > 0$
\STATE{$\delta_1^{\text{\rm{NSHB}}} := \eta_1 \sqrt{\frac{1}{1-\beta_1} \frac{C_{\text{NSHB}}^2}{b}}$}
\STATE{$\alpha_0 := \min \left\{ \frac{1}{4L_f \left| \delta_1^{\text{\rm{NSHB}}} \right| \left(1+\bar{d}\right)}, \frac{1}{\sqrt{2}\sigma \left| \delta_1^{\text{\rm{NSHB}}} \right|} \right\} $}
\STATE{$M^p:= \frac{1}{\alpha_0 \epsilon}$}
\FOR{$m=1$ to $M+1$}
\IF{$m \neq M+1$}
\STATE{$\epsilon_m := \sigma^2 {\delta_m^{\text{\rm{NSHB}}}}^2, $}
\STATE{$T_F := H_4 / \left( \epsilon_m - H_3 \eta_m \right)$}
\STATE{$\gamma_m := \frac{(M-m)^p}{\left\{ M-(m-1)\right\}^p}$}
\STATE{$\frac{\kappa_m \eta_m \sqrt{\frac{1}{1-\rho_m\beta_m} \frac{C_{\text{NSHB}}^2}{\lambda_m b_m}}}{\eta_m \sqrt{\frac{1}{1-\beta_m} \frac{C_{\text{NSHB}}^2}{b_m}}} = \gamma_m$ $(\kappa_m, \rho_m \in (0,1], \lambda_m \geq 1)$}
\ENDIF
\STATE{$\bm{x}_{m+1} := \text{SGD}(T_F, \bm{x}_m, \hat{f}_{\delta_m}, \eta_m, b_m)$}
\STATE{$\eta_{m+1} := \kappa_m \eta_m, b_{m+1} := \lambda_m b_m, \beta_{m+1}^2 := \rho_m \beta_m^2 $}
\STATE{$\delta_{m+1}^{\text{\rm{NSHB}}} := \eta_{m+1} \sqrt{\frac{1}{1-\beta_{m+1}} \frac{C_{\text{NSHB}}^2}{b_{m+1}}}$}
\ENDFOR
\RETURN $\bm{x}_{M +2}$
\end{algorithmic}
\end{algorithm}

\begin{thm}
[Convergence analysis of Algorithm \ref{alg:gnc4}]\label{thm:E.1} Let $\epsilon \in (0, 1]$ and $f$ be an $L_f$-Lipschitz new $\sigma$-nice function. Suppose that we apply Algorithm \ref{alg:gnc4}; after $\mathcal{O}\left( 1/\epsilon^{\frac{1}{p}}\right)$ rounds. Then, the algorithm reaches an $\epsilon$-neighborhood of the global optimal solution $\bm{x}^\star$.
\end{thm}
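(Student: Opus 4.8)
The plan is to mirror the proof of Theorem~\ref{thm:02} almost line for line, replacing $\delta^{\text{SHB}}$ throughout by $\delta^{\text{NSHB}} = \eta\sqrt{\frac{1}{1-\beta}\frac{C_{\text{NSHB}}^2}{b}}$ and using the NSHB-specific hyperparameter constraint built into Algorithm~\ref{alg:gnc4}. First I would unwind the update rule: from $\eta_{m+1} := \kappa_m\eta_m$, $b_{m+1}:=\lambda_m b_m$, $\beta_{m+1}^2:=\rho_m\beta_m^2$, together with the constraint $\frac{\kappa_m\eta_m\sqrt{\frac{1}{1-\rho_m\beta_m}\frac{C_{\text{NSHB}}^2}{\lambda_m b_m}}}{\eta_m\sqrt{\frac{1}{1-\beta_m}\frac{C_{\text{NSHB}}^2}{b_m}}}=\gamma_m$, I conclude $\delta_{m+1}^{\text{NSHB}} = \gamma_m\,\delta_m^{\text{NSHB}}$. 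Telescoping with $\gamma_m = \frac{(M-m)^p}{\{M-(m-1)\}^p}$ collapses the product, and with $M^p := \frac{1}{\alpha_0\epsilon}$ this gives $\delta_M^{\text{NSHB}} = \delta_1^{\text{NSHB}}/M^p = \delta_1^{\text{NSHB}}\alpha_0\epsilon$, exactly as in the SHB case.

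Second, I would invoke Theorem~\ref{thm:d.1} on the smoothed function $\hat{f}_{\delta_M^{\text{NSHB}}}$ — $\sigma$-strong convexity on $N(\bm{x}^\star; d_M|\delta_M^{\text{NSHB}}|)$ being guaranteed by the new $\sigma$-nice property — to obtain $\mathbb{E}[\hat{f}_{\delta_M^{\text{NSHB}}}(\bm{x}_{M+1}) - \hat{f}_{\delta_M^{\text{NSHB}}}(\bm{x}_{\delta_M^{\text{NSHB}}}^\star)] \le \epsilon_M = \sigma^2(\delta_M^{\text{NSHB}})^2$ after $T_F$ inner iterations. Third, I would decompose $f(\bm{x}_{M+2}) - f(\bm{x}^\star)$ into $\{f-\hat f\}$ at $\bm{x}_{M+2}$, $\{\hat f-f\}$ at $\bm{x}^\star$, and the optimization gap, then apply Lemma~\ref{lem:d.2} ($|\hat f_\delta - f|\le|\delta|L_f$), Lemma~\ref{lem:d.1} (Lipschitzness transferred to $\hat f$), and the fact that $\bm{x}_{M+1},\bm{x}_{M+2}\in N(\bm{x}^\star; d_M|\delta_M^{\text{NSHB}}|)$ with $d_M\le\bar{d}$, to get $\mathbb{E}[f(\bm{x}_{M+2})-f(\bm{x}^\star)]\le 2L_f(1+\bar{d})|\delta_1^{\text{NSHB}}|\alpha_0\epsilon + (\sigma|\delta_1^{\text{NSHB}}|\alpha_0\epsilon)^2$; plugging in $\alpha_0 = \min\{\frac{1}{4L_f|\delta_1^{\text{NSHB}}|(1+\bar{d})},\frac{1}{\sqrt{2}\sigma|\delta_1^{\text{NSHB}}|}\}$ bounds this by $\epsilon$. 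Finally, for the round count I would sum $T_{\text{total}} = \sum_{m=1}^{M+1} H_4/(\epsilon_m - H_3\eta_m)$, bound each summand by the one at $m=M$ (where the smoothing, hence $\epsilon_m$, is smallest) and $\eta_m\le\eta_1$, giving $T_{\text{total}}\le \frac{H_4 M^{2p}(M+1)}{\sigma^2(\delta_1^{\text{NSHB}})^2 - H_3\eta_1 M^{2p}}$, and substituting $M^p = 1/(\alpha_0\epsilon)$ to obtain $\mathcal{O}(1/\epsilon^{1/p})$.

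The only genuinely new point — and the one step that does not simply copy over from Theorem~\ref{thm:02} — is verifying that the decay constraint is \emph{realizable} with admissible hyperparameters: one must exhibit $\kappa_m\in(0,1]$, $\rho_m\in(0,1]$, $\lambda_m\ge 1$ satisfying the ratio equation and check that the update $\beta_{m+1}^2 := \rho_m\beta_m^2$ keeps $\beta_{m+1}\in[0,1)$ a valid momentum factor. Because $\delta^{\text{NSHB}}$ depends on $\beta$ only through the monotone factor $1/(1-\beta)$ rather than through the more intricate $\hat\beta$ appearing in SHB, this bookkeeping is milder here than in Algorithm~\ref{alg:gnc3}; everything downstream of the identity $\delta_{m+1}^{\text{NSHB}}=\gamma_m\delta_m^{\text{NSHB}}$ is then mechanical and identical to the proof of Theorem~\ref{thm:02}.
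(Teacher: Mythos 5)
Your proposal follows essentially the same route as the paper's own proof: the identity $\delta_{m+1}^{\text{NSHB}}=\gamma_m\delta_m^{\text{NSHB}}$, the telescoping to $\delta_M^{\text{NSHB}}=\delta_1^{\text{NSHB}}\alpha_0\epsilon$, the application of Theorem \ref{thm:d.1} together with Lemmas \ref{lem:d.1} and \ref{lem:d.2}, the neighborhood bound with $d_M\le\bar d$ and the choice of $\alpha_0$, and the same summation bound for $T_{\text{total}}$ yielding $\mathcal{O}(1/\epsilon^{1/p})$. Your added remark on verifying realizability of $(\kappa_m,\rho_m,\lambda_m)$ is a reasonable extra check but does not change the argument, which matches the paper's.
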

\begin{proof}
From the definition of $\delta_{m+1}^{\text{\rm{NSHB}}}$ (see Algorithm \ref{alg:gnc4}), we obtain
\begin{align*}
\delta_{m+1}^{\text{\rm{NSHB}}} &:= \eta_{m+1} \sqrt{\frac{1}{1-\beta_{m+1}} \frac{C_{\text{NSHB}}^2}{b_{m+1}}} \\
&= \kappa_m \eta_{m} \sqrt{\frac{1}{1-\rho_m\beta_m} \frac{C_{\text{NSHB}}^2}{\lambda_m b_m}} \\
&= \gamma_m \delta_m^{\text{\rm{NSHB}}}.
\end{align*}
Hence, from $M^p := \frac{1}{\alpha_0 \epsilon}$ and $\gamma_m := \frac{(M-m)^p}{\left\{M-(m-1) \right\}^p}$, 
\begin{align*}
\delta_M^{\text{\rm{NSHB}}} &= \delta_1^{\text{\rm{NSHB}}} (\gamma_1 \gamma_2 \cdot \gamma_{M-1}) \\
&=\delta_1^{\text{\rm{NSHB}}} \cdot \frac{(M-1)^p}{M^p} \cdot \frac{(M-2)^p}{(M-1)^p} \cdot \frac{(M-3)^p}{(M-2)^p} \cdots \frac{1}{2^p} \\
&=\delta_1^{\text{\rm{NSHB}}} \cdot \frac{1}{M^p} \\
&=\delta_1^{\text{\rm{NSHB}}} \alpha_0 \epsilon.
\end{align*}
According to Theorem \ref{thm:d.1},
\begin{align*}
\mathbb{E}\left[ \hat{f}_{\delta_M^{\text{\rm{NSHB}}}} (\bm{x}_{M+1}) - \hat{f}_{\delta_M^{\text{\rm{NSHB}}}} (\bm{x}_{\delta_M^{\text{\rm{NSHB}}}}^\star)\right] 
\leq \epsilon_M 
:=\sigma^2 {\delta_M^{\text{\rm{NSHB}}}}^2
=\left(\sigma \delta_1^{\text{\rm{NSHB}}} \alpha_0 \epsilon \right)^2.
\end{align*}
From Lemmas \ref{lem:d.1} and \ref{lem:d.2},
\begin{align*}
f(\bm{x}_{M+2}) - f(\bm{x}^\star)
&=\left\{f(\bm{x}_{M+2}) - \hat{f}_{\delta_M^{\text{\rm{NSHB}}}}(\bm{x}_{M+2})\right\} + \left\{\hat{f}_{\delta_M^{\text{\rm{NSHB}}}}(\bm{x}^\star) - f(\bm{x}^\star)\right\} + \left\{\hat{f}_{\delta_M^{\text{\rm{NSHB}}}}(\bm{x}_{M+2}) - \hat{f}_{\delta_M^{\text{\rm{NSHB}}}}(\bm{x}^\star) \right\} \\
&\leq \left\{f(\bm{x}_{M+2}) - \hat{f}_{\delta_M^{\text{\rm{NSHB}}}}(\bm{x}_{M+2})\right\} + \left\{\hat{f}_{\delta_M^{\text{\rm{NSHB}}}}(\bm{x}^\star) - f(\bm{x}^\star)\right\} + \left\{\hat{f}_{\delta_M^{\text{\rm{NSHB}}}}(\bm{x}_{M+2}) - \hat{f}_{\delta_M^{\text{\rm{NSHB}}}}(\bm{x}_{\delta_M^{\text{\rm{NSHB}}}}^\star)\right\} \\
&\leq \left| \delta_M^{\text{\rm{NSHB}}} \right| L_f + \left| \delta_M^{\text{\rm{NSHB}}} \right| L_f + \left\{\hat{f}_{\delta_M^{\text{\rm{NSHB}}}}(\bm{x}_{M+2}) - \hat{f}_{\delta_M^{\text{\rm{NSHB}}}}(\bm{x}_{\delta_M^{\text{\rm{NSHB}}}}^\star)\right\} \\
&= 2 \left| \delta_M^{\text{\rm{NSHB}}} \right| L_f + \left\{ \hat{f}_{\delta_M^{\text{\rm{NSHB}}}} (\bm{x}_{M+2}) - \hat{f}_{\delta_M^{\text{\rm{NSHB}}}} (\bm{x}_{M+1})\right\} + \left\{\hat{f}_{\delta_M^{\text{\rm{NSHB}}}} (\bm{x}_{M+1}) - \hat{f}_{\delta_M^{\text{\rm{NSHB}}}}(\bm{x}_{\delta_M^{\text{\rm{NSHB}}}}^\star) \right\} \\
&\leq 2 \left| \delta_M^{\text{\rm{NSHB}}} \right| L_f + L_f \left\| \bm{x}_{M+2} - \bm{x}_{M+1} \right\| + \left\{\hat{f}_{\delta_M^{\text{\rm{NSHB}}}} (\bm{x}_{M+1}) - \hat{f}_{\delta_M^{\text{\rm{NSHB}}}}(\bm{x}_{\delta_M^{\text{\rm{NSHB}}}}^\star) \right\}. 
\end{align*}
Then, we have
\begin{align*}
\mathbb{E} \left[ f(\bm{x}_{M+2}) - f(\bm{x}^\star) \right]
&\leq 2 \left| \delta_M^{\text{\rm{NSHB}}} \right| L_f + 2L_f d_M \left| \delta_M^{\text{\rm{NSHB}}} \right| + \epsilon_M \\
&\leq 2 \left| \delta_M^{\text{\rm{NSHB}}} \right| L_f + 2L_f \bar{d} \left| \delta_M^{\text{\rm{NSHB}}} \right| + \epsilon_M \\
&=2L_f \left| \delta_M^{\text{\rm{NSHB}}} \right| \left(1+ \bar{d}\right) + \epsilon_M,
\end{align*}
where $\left\| \bm{x}_{M+2} - \bm{x}_{M+1} \right\| \leq 2 d_M \left| \delta_M^{\text{\rm{NSHB}}} \right|$, since $\bm{x}_{M+2}, \bm{x}_{M+1} \in N(\bm{x}^\star; d_M \left| \delta_M^{\text{\rm{NSHB}}} \right|)$, and $d_M \leq \bar{d} < +\infty$. Therefore, 
\begin{align*}
\mathbb{E} \left[ f(\bm{x}_{M+2}) - f(\bm{x}^\star) \right]
&\leq 2L_f\left(1+\bar{d}\right) \left| \delta_1^{\text{\rm{NSHB}}} \right| \alpha_0 \epsilon + \left(\sigma \left| \delta_1^{\text{\rm{NSHB}}} \right| \alpha_0 \epsilon \right)^2 \\
&\leq \epsilon,
\end{align*}
where $\alpha_0 := \min \left\{ \frac{1}{4L_f \left| \delta_1^{\text{\rm{NSHB}}}\right| \left(1+\bar{d}\right)}, \frac{1}{\sqrt{2}\sigma \left| \delta_1^{\text{\rm{NSHB}}} \right|} \right\}$.\\
Let $T_{\text{total}}$ be the total number of queries made by Algorithm \ref{alg:gnc4}; then, 
\begin{align*}
T_{\text{total}} 
&= \sum_{m=1}^{M+1} \frac{H_4}{\epsilon_m - H_3 \eta_m} 
= \sum_{m=1}^{M+1} \frac{H_4}{\sigma^2{\delta_m^{\text{\rm{NSHB}}}}^2 - H_3\eta_m} \\
&= \frac{H_4}{\sigma^2{\delta_1^{\text{\rm{NSHB}}}}^2 - H_3\eta_1} + \frac{H_4}{\sigma^2\delta_2^2 - H_3\eta_2} + \cdots + \frac{2H_4}{\sigma^2{\delta_M^{\text{\rm{NSHB}}}}^2 - H_3\eta_{M}}\\
&\leq \frac{H_4(M+1)}{\sigma^2 {\delta_M^{\text{\rm{NSHB}}}}^2 - H_3\eta_M} \\
&\leq \frac{H_4(M+1)}{\sigma^2 {\delta_M^{\text{\rm{NSHB}}}}^2 - H_3\eta_1} \\
&= \frac{H_4(M+1)}{\sigma^2 {\delta_1^{\text{\rm{NSHB}}}}^2 \frac{1}{M^{2p}} - H_3\eta_1} \\
&= \frac{H_4M^{2p}(M+1)}{\sigma^2 {\delta_1^{\text{\rm{NSHB}}}}^2 - H_3\eta_1 M^{2p}} \\
&= \frac{H_4M^{2p+1}}{\sigma^2 {\delta_1^{\text{\rm{NSHB}}}}^2 - H_3\eta_1 M^{2p}} + \frac{H_4M^{2p}}{\sigma^2 {\delta_1^{\text{\rm{NSHB}}}}^2 - H_3\eta_1 M^{2p}}.
\end{align*}
From $M^p := \frac{1}{\alpha_0 \epsilon}$, 
\begin{align*}
T_{\text{total}} 
&= \frac{H_4 \left( \frac{1}{\alpha_0 \epsilon} \right)^{\frac{1}{p} + 2}}{\sigma^2 {\delta_1^{\text{\rm{NSHB}}}}^2 - H_3\eta_1 \left( \frac{1}{\alpha_0 \epsilon} \right)^2} + \frac{H_4 \left( \frac{1}{\alpha_0 \epsilon} \right)^2}{\sigma^2 {\delta_1^{\text{\rm{NSHB}}}}^2 - H_3\eta_1 \left( \frac{1}{\alpha_0 \epsilon} \right)^2} \\
&\leq \frac{2H_4 \left( \frac{1}{\alpha_0 \epsilon} \right)^{\frac{1}{p} + 2}}{\sigma^2 {\delta_1^{\text{\rm{NSHB}}}}^2 - H_3\eta_1 \left( \frac{1}{\alpha_0 \epsilon} \right)^2} \\
&= \frac{2H_4 \left( \frac{1}{\alpha_0 \epsilon} \right)^{\frac{1}{p}}}{\sigma^2 {\delta_1^{\text{\rm{NSHB}}}}^2 (\alpha_0 \epsilon)^2 - H_3\eta_1} \\
&= \frac{2H_4}{\sigma^2 {\delta_1^{\text{\rm{NSHB}}}}^2 (\alpha_0 \epsilon)^{\frac{1}{p}+2} - H_3\eta_1 (\alpha_0 \epsilon)^{\frac{1}{p}}} \\
&\leq \frac{2H_4}{\sigma^2 {\delta_1^{\text{\rm{NSHB}}}}^2 (\alpha_0 \epsilon)^{\frac{1}{p}} - H_3\eta_1 (\alpha_0 \epsilon)^{\frac{1}{p}}} \\
&= \mathcal{O}\left( \frac{1}{\epsilon^{\frac{1}{p}}}\right).
\end{align*}
This completes the proof.
\end{proof}

\subsection{Experiments on implicit graduated optimization with NSHB}
\label{sec:f.3}
To test the ability of Algorithm \ref{alg:gnc3} to reduce stochastic noise, we compared  a "constant" method in which the learning rate, batch size, and momentum were all constant with a method in which the hyperparameters were updated to reduce the degree of smoothing $\delta^{\text{\rm{NSHB}}}$. Figures \ref{fig:imp-nshb} and \ref{fig:imp-nshb-wide} plot the accuracy in testing and the loss function value in training with NSHB versus the number of epochs. We tuned the noise reduction methods so that the decay rate was $\gamma_m := \frac{(M-m)^{0.9}}{\left\{ M-(m-1)\right\}^{0.9}} \ (M = 200, m \in [M])$.
Note that results such as ``only momentum decayed'' are omitted because there are no parameters that would satisfy the decay rate $\gamma_m$.

\begin{figure}[htbp]
\begin{minipage}[t]{0.49\columnwidth}
\centering
\includegraphics[width=\columnwidth]{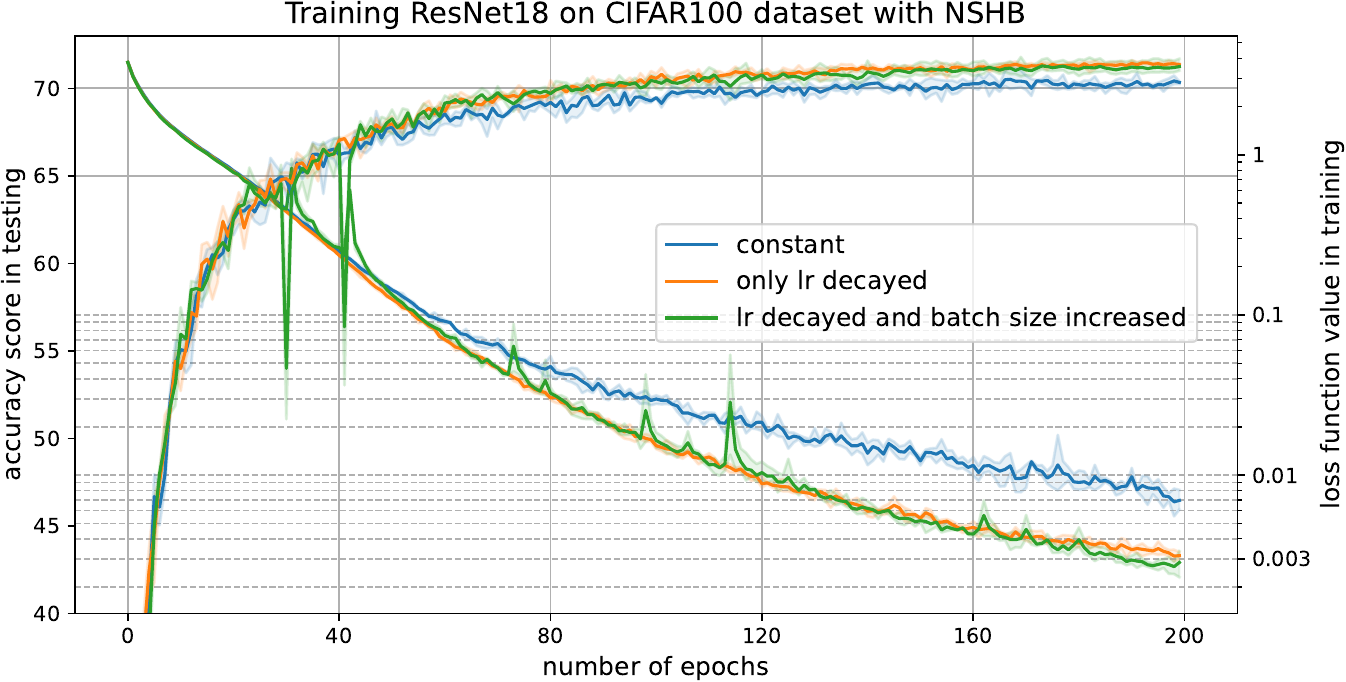}%
\caption{Accuracy score for testing and loss function value for training versus epochs in training of \textbf{ResNet18} on CIFAR100 dataset with implicit graduated optimization with \textbf{NSHB} (Algorithm \ref{alg:gnc4}).}
\label{fig:imp-nshb}
\end{minipage}
\hspace{0.01\columnwidth}
\begin{minipage}[t]{0.49\columnwidth}
\centering
\includegraphics[width=\columnwidth]{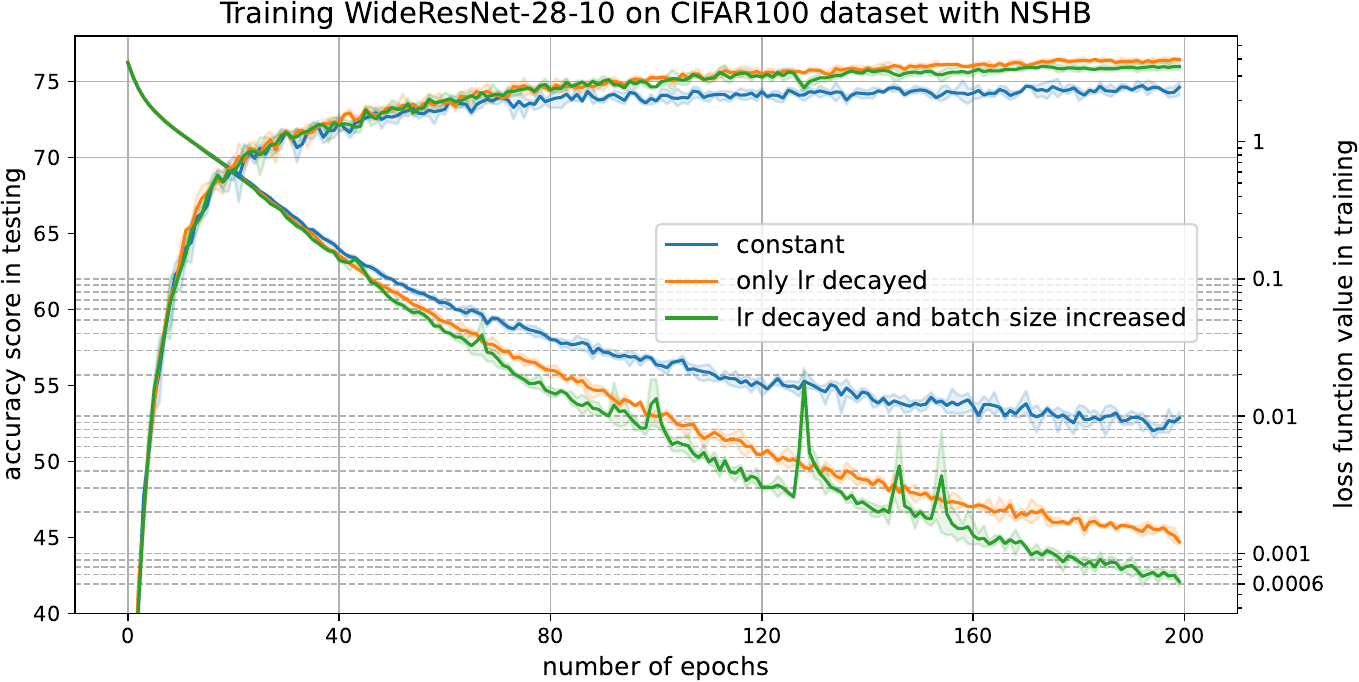}%%
\caption{Accuracy score for testing and loss function value for training versus epochs in training of \textbf{WideResNet-28-10} on CIFAR100 dataset with implicit graduated optimization with \textbf{NSHB} (Algorithm \ref{alg:gnc4}).}
\label{fig:imp-nshb-wide}
\end{minipage}
\end{figure}

\section{F. Experiments on optimal noise scheduling with SHB and NSHB}
\label{sec:f.4}
Sato and Iiduka identified the conditions that the decay rate $(\gamma_m)_{m \in [M]}$ of noise $(\delta_m)_{m \in [M]}$ must satisfy by considering the conditions necessary for the success of a graduated optimization algorithm for the new $\sigma$-nice function \cite{Sato2023Usi}. They found that decay rate $\gamma_m$ should satisfy
\begin{align}\label{eq:18}
\frac{\sqrt{(m - M - \sqrt{2})^2 - 1} -1}{-(m -M -\sqrt{2})} \leq \gamma_m < 1 \ \left(m \in [M], p \in (0,1] \right).
\end{align}
The noise level, i.e., the degree of smoothing, is proportional to the learning rate, which immediately leads to the optimal decay rate of the decaying learning rate. Figure \ref{fig:gamma} plots the decay rate of the existing learning rate scheduler and the regions satisfying inequality (\ref{eq:18}) when $M=200$. Figure \ref{fig:gamma-big} is an enlarged version of Figure \ref{fig:gamma}.

\begin{figure}[htbp]
\begin{minipage}[t]{0.49\columnwidth}
\centering
\includegraphics[width=\columnwidth]{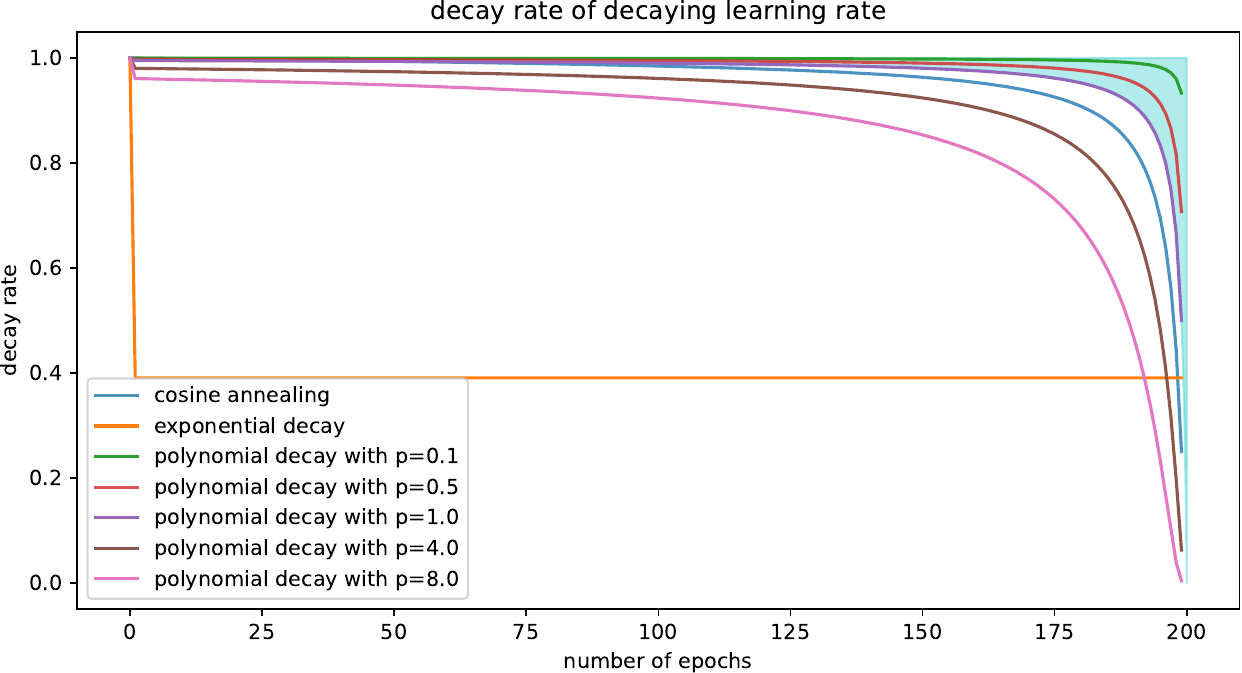}%
\caption{Decay rate of existing learning rate schedules. Blue-filled area represents regions satisfying inequality (\ref{eq:18}) when $M=200$.}
\label{fig:gamma}
\end{minipage}
\hspace{0.01\columnwidth}
\begin{minipage}[t]{0.49\columnwidth}
\centering
\includegraphics[width=\columnwidth]{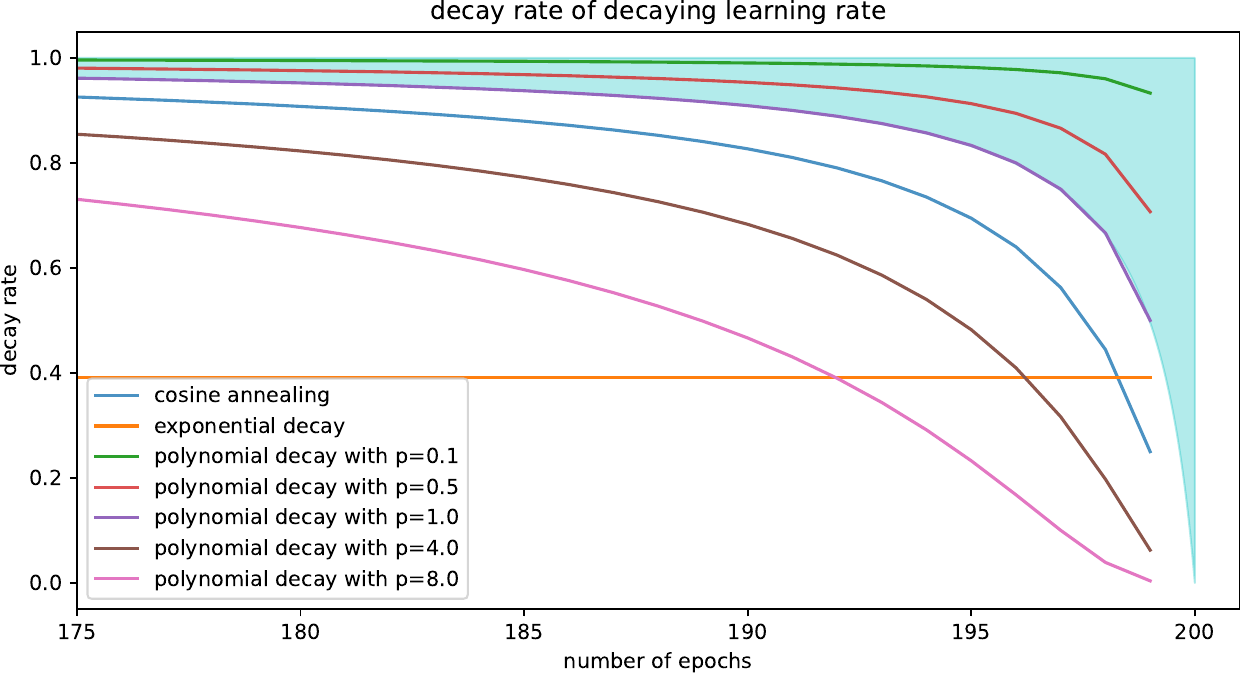}%
\caption{An enlarged version of Figure \ref{fig:gamma}.}
\label{fig:gamma-big}
\end{minipage}
\end{figure}

As shown in Figures \ref{fig:gamma} and \ref{fig:gamma-big}, inequality (\ref{eq:18}) is satisfied by polynomial decay with $p \in (0,1]$ and not by any other learning rate scheduler. The rate of polynomial decay can be written as $\frac{(M-m)^p}{\left\{ M - (m-1)\right\}^p}$, which is used in Algorithms \ref{alg:gnc2} and \ref{alg:gnc3}. Figures \ref{fig:gamma} and \ref{fig:gamma-big} also show that polynomial decay with $p \in (0,1]$ may give the smallest loss function value through implicit graduated optimization. Sato and Iiduka confirmed this experimentally for SGD; i.e., they showed that polynomial decay with $p \in (0,1]$ can achieve the smallest training loss function value for SGD, as per theory \cite{Sato2023Usi}. We thus clarified whether polynomial decay with $p \in (0,1]$ can achieve the smallest loss function value for SGD with momentum as well.

\begin{figure}[htbp]
\begin{minipage}[htbp]{0.49\columnwidth}
\centering
\includegraphics[width=\columnwidth]{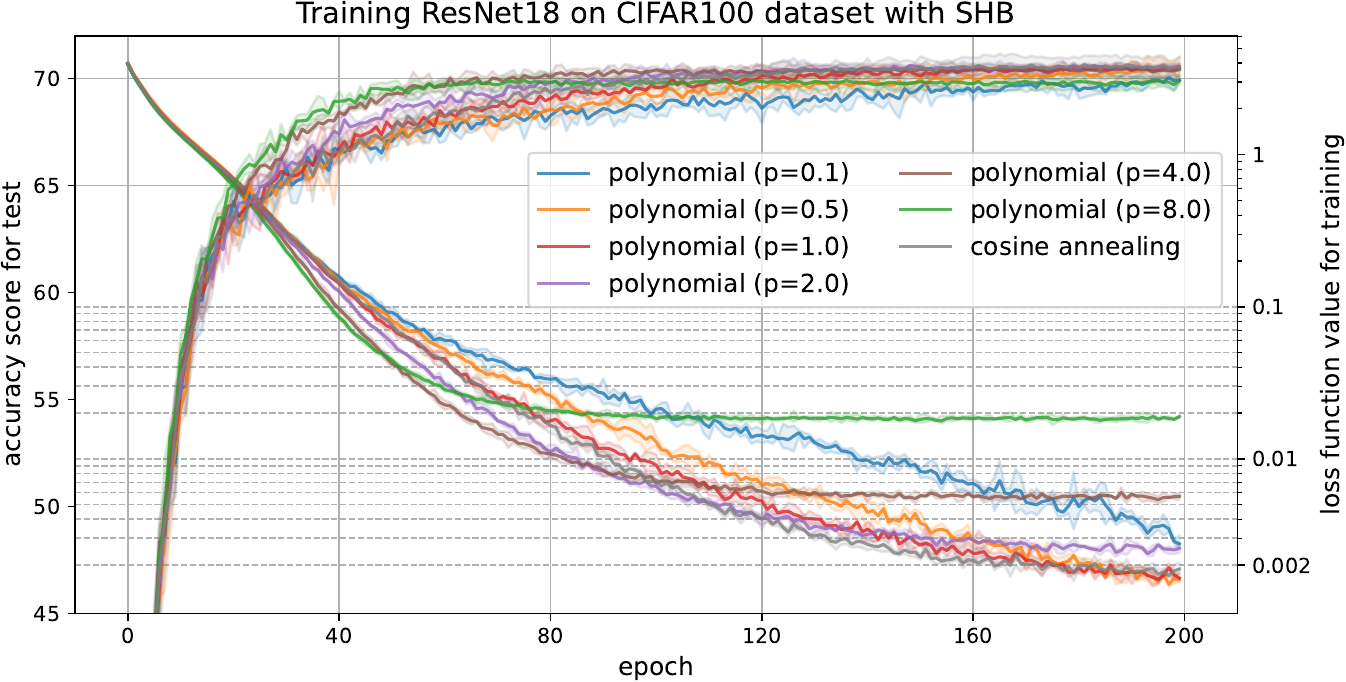}%
\caption{Accuracy score for testing and loss function value for training versus epochs in training of ResNet18 on CIFAR100 dataset with \textbf{SHB} and learning rate scheduler.}
\label{fig:13}
\end{minipage}
\hspace{0.01\columnwidth}
\begin{minipage}[htbp]{0.49\columnwidth}
\centering
\includegraphics[width=\columnwidth]{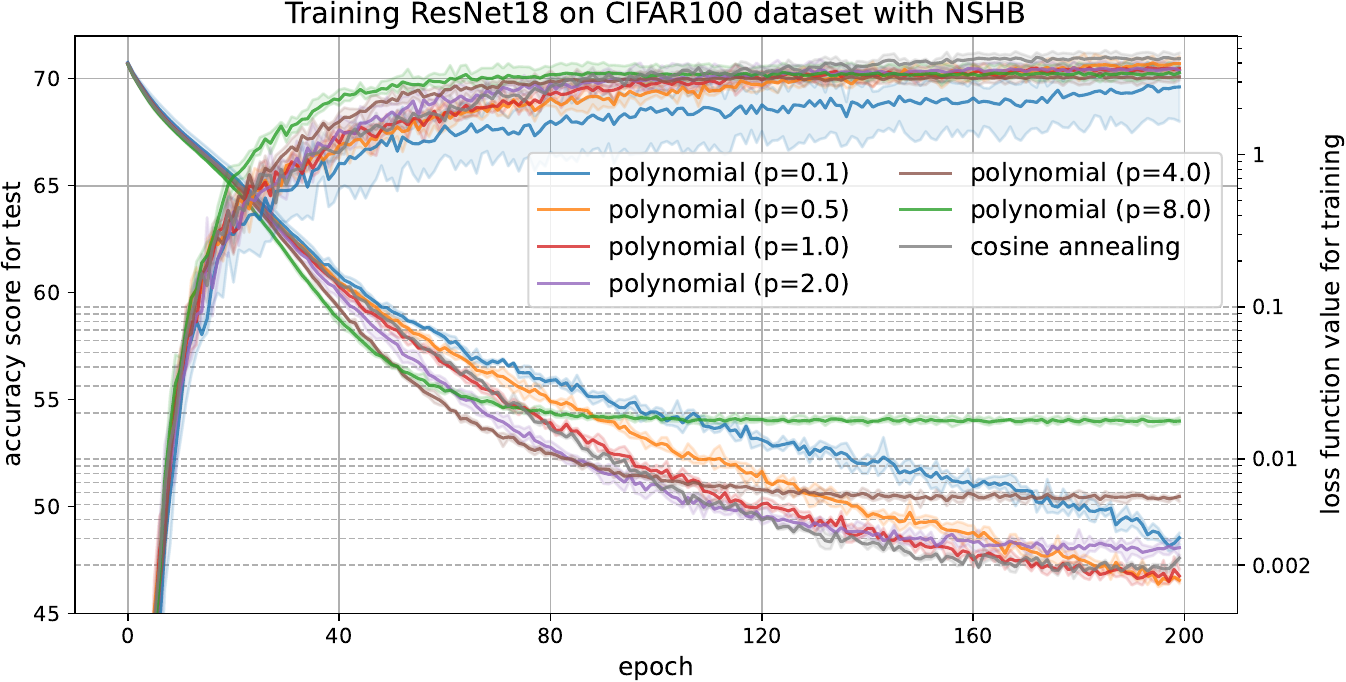}%
\caption{Accuracy score for testing and loss function value for training versus epochs in training of ResNet18 on CIFAR100 dataset with \textbf{NSHB} and learning rate scheduler.}
\label{fig:14}
\end{minipage}
\end{figure}

Figures \ref{fig:13} and \ref{fig:14} plot test accuracy and training loss function values versus batch size for SHB and NSHB with learning rate scheduler in training of ResNet18 on the CIFAR100 dataset. The initial value of the learning rate was $0.1$, and the power of the exponential decay was $0.9$. The batch size and momentum were constants, $256$ and $0.9$, respectively.

Figures \ref{fig:15} and \ref{fig:16} plot test accuracy versus training loss function values for SHB and NSHB with learning rate scheduler in training of WideResNet-28-10 on the CIFAR100 dataset.

\begin{figure}[htbp]
\begin{minipage}[htbp]{0.49\columnwidth}
\centering
\includegraphics[width=\columnwidth]{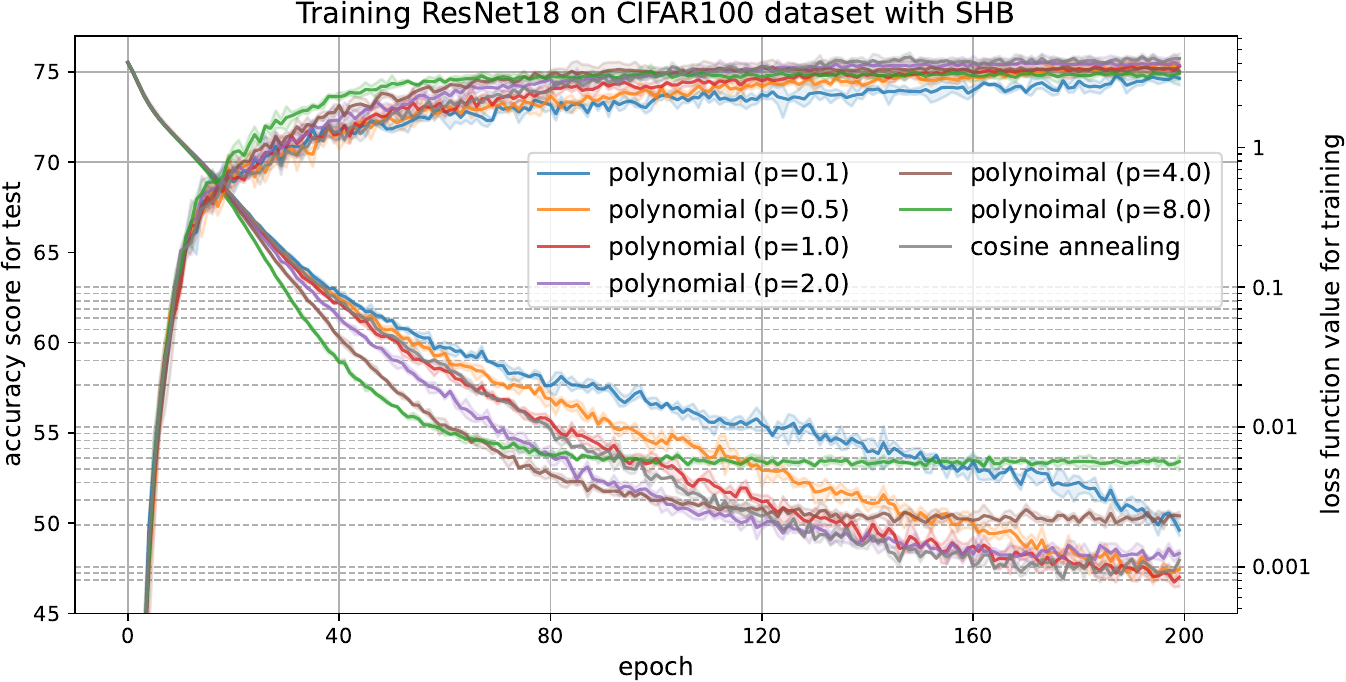}%
\caption{Accuracy score for testing and loss function value for training versus epochs in training of WideResNet-28-10 on CIFAR100 dataset with \textbf{SHB} and learning rate scheduler.}
\label{fig:15}
\end{minipage}
\hspace{0.01\columnwidth}
\begin{minipage}[htbp]{0.49\columnwidth}
\centering
\includegraphics[width=\columnwidth]{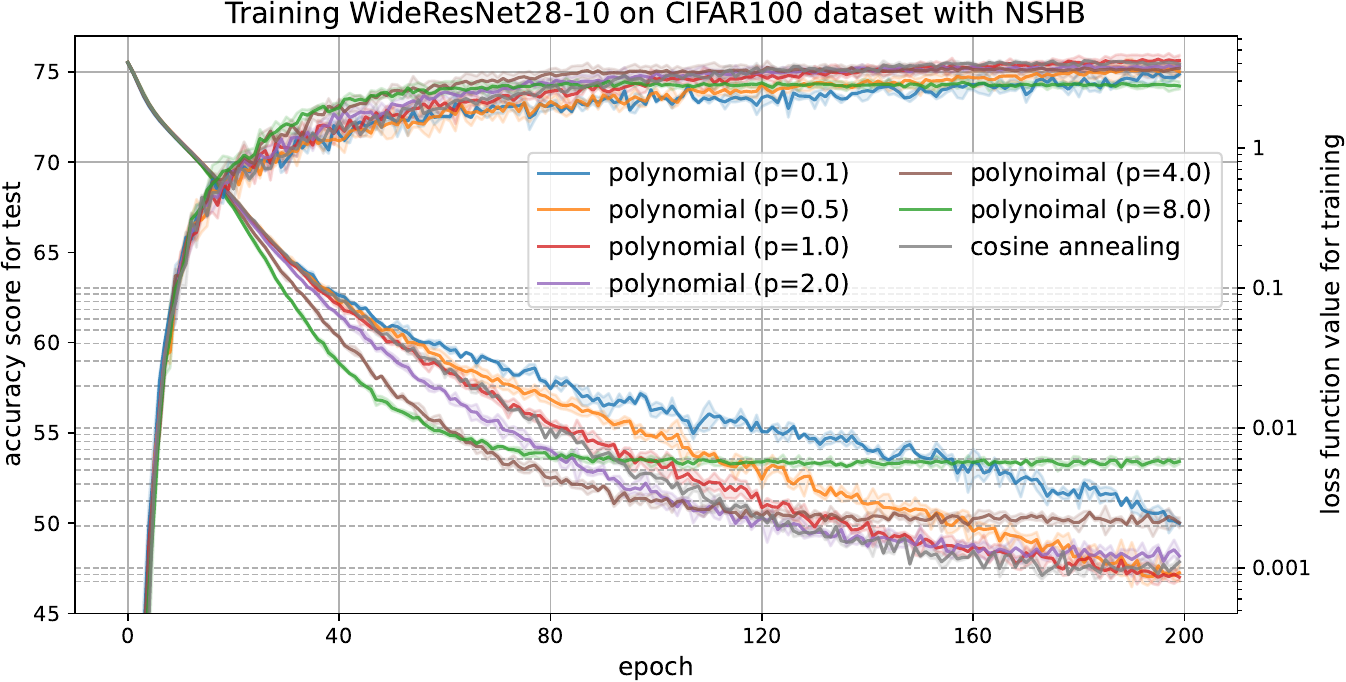}%
\caption{Accuracy score for testing and loss function value for training versus epochs in training of WideResNet-28-10 on CIFAR100 dataset with \textbf{NSHB} and learning rate scheduler.}
\label{fig:16}
\end{minipage}
\end{figure}

Figures \ref{fig:13}-\ref{fig:16} show that polynomial decay with $p \in (0,1]$ can achieve the smallest loss function values among the learning rate schedulers. Note that when $p=0.1$, the loss function value fell more slowly than when $p \in \{0.5, 0.9, 1\}$. This can be explained by the convergence rate of Algorithms \ref{alg:gnc2} and \ref{alg:gnc3} being $\mathcal{O}\left( 1/\epsilon^{\frac{1}{p}} \right) \ \left(p \in (0,1] \right)$. That is, the closer $p$ is to $1$, the fewer iterations are needed, so when $p = 0.1$, we need many more iterations than when $p \in \{ 0.5, 0.9, 1\}$.

\end{document}